\definecolor{dkgreen}{rgb}{0,0.6,0}
\definecolor{gray}{rgb}{0.5,0.5,0.5}
\definecolor{mauve}{rgb}{0.58,0,0.82}
\tiny\color{gray},
\title{Learning dynamic polynomial proofs}
\newtheorem{theorem}{Theorem}
\newtheorem{remark}{Remark}
\newcommand{\xx}{\mathbf{x}}
\newcommand{\NN}{\mathbb{N}}
\newcommand{\RR}{\mathbb{R}}
\newcommand{\nc}{\newcommand}
\nc{\rnc}{\renewcommand}
\nc{\lbar}[1]{\overline{#1}}
\nc{\bra}[1]{\langle#1|}
\nc{\ket}[1]{|#1\rangle}
\nc{\ketbra}[2]{|#1\rangle\!\langle#2|}
\nc{\braket}[2]{\langle#1|#2\rangle}
\nc{\proj}[1]{| #1\rangle\!\langle #1 |}
\nc{\avg}[1]{\langle#1\rangle}
\nc{\smfrac}[2]{\mbox{$\frac{#1}{#2}$}}
\nc{\tr}{\operatorname{tr}}
\author{%
  Alhussein Fawzi \\
  DeepMind\\
  \texttt{afawzi@google.com} \\
  \And
  Mateusz Malinowski \\
  DeepMind \\
  \texttt{mateuszm@google.com} \\
  \AND
  Hamza Fawzi \\
  University of Cambridge \\
  \texttt{hf323@cam.ac.uk} \\
  \And
  Omar Fawzi \\
  ENS Lyon \\
  \texttt{omar.fawzi@ens-lyon.fr} \\
}
\begin{document}

\maketitle

\begin{abstract}
Polynomial inequalities lie at the heart of many mathematical disciplines. In this paper, we consider the fundamental computational task of automatically searching for proofs of polynomial inequalities. We adopt the  framework of \textit{semi-algebraic proof systems} that manipulate polynomial inequalities via elementary inference rules that infer new inequalities from the premises. These proof systems are known to be very powerful, but searching for proofs remains a major difficulty. 
In this work, we introduce a machine learning based method to search for a \textit{dynamic} proof within these proof systems. We propose a deep reinforcement learning framework that learns an embedding of the polynomials and guides the choice of inference rules, taking the inherent symmetries of the problem as an inductive bias.
We compare our approach with powerful and widely-studied linear programming hierarchies based on \textit{static} proof systems, and  show that our method reduces the size of the linear program by several orders of magnitude while also improving performance. These results hence pave the way towards augmenting powerful and well-studied semi-algebraic proof systems with machine learning guiding strategies for enhancing the expressivity of such proof systems.
\end{abstract}

\section{Introduction}

Polynomial inequalities abound in mathematics and its applications. 
Many questions in the areas of control theory \cite{parrilo2000structured}, robotics \cite{majumdar2013control}, geometry \cite{parrilo2004inequality}, combinatorics \cite{lovasz1979shannon}, program verification \cite{marechal2016polyhedral} can be modeled using polynomial inequalities. For example, deciding the stability of a control system can be reduced to proving the nonnegativity of a polynomial \cite{papachristodoulou2002construction}. Producing \textit{proofs} of polynomial inequalities is thus of paramount importance for these applications, and has been a very active field of research \cite{lasserrebook}.

To produce such proofs, we rely on \textit{semi-algebraic proof systems}, which define a framework for manipulating polynomial inequalities. These proof systems define inference rules that generate new polynomial inequalities from existing ones. For example, inference rules can state that the  product and sum of two non-negative polynomials is non-negative. Given a polynomial $f(\xx)$, a proof of global non-negativity of $f$ consists of a sequence of applications of the inference rules, starting from a set of axioms, until we reach the target statement. Finding such a path is in general a very complex task.
To overcome this, a very popular approach in polynomial optimization
 is to use \textit{hierarchies} that are based on \emph{static} proof systems, whereby inference rules are unrolled for a \textit{fixed} number of steps, and convex optimization is leveraged for the proof search.
 Despite the great success of such methods in computer science and polynomial optimization \cite{laurent2003comparison,chlamtac2012convex}, this approach however can suffer from a lack of expressivity for lower levels of the hierarchy, and a curse of dimensionality at higher levels of the hierarchy. Moreover, such static proofs significantly depart from our common conception of the proof search process, which is  inherently \textit{sequential}. This makes static proofs  difficult to interpret.

In this paper, we use machine learning to guide the search of a \textit{dynamic proof} of  polynomial inequalities. We believe this is the first attempt to use machine learning to search for semi-algebraic proofs. Specifically, we list our main contributions as follows:
\begin{itemize}[leftmargin=5.0mm]
    \item We propose a novel neural network architecture to handle polynomial inequalities with built-in support for the symmetries of the problem.
    \item Leveraging the proposed architecture, we train a \textit{prover} agent with DQN \cite{mnih2013playing} in an unsupervised environment; i.e., without having access to any existing proof or ground truth information. 
    \item We illustrate our results on the \textit{maximum stable set problem}, a well known combinatorial problem that is intractable in general. Using a well-known semi-algebraic proof system \cite{lovasz1991cones, sheraliadams}, we show that our dynamic prover significantly outperforms the corresponding static, unrolled, method. 
\end{itemize}

\textbf{Related works.} Semi-algebraic proof systems have been studied by various communities \textit{e.g.}, in real algebraic geometry, global optimization, and in theoretical computer science. Completeness results for these proof systems have been obtained in real algebraic geometry, e.g., \cite{krivine1964quelques,stengle1974nullstellensatz}. In global optimization, such proof systems have led to the development of very successful convex relaxations based on static hierarchies \cite{parrilo2000structured,lasserre2001global,laurent2003comparison}.
In theoretical computer science, static hierarchies have become a standard tool for algorithm design \cite{BarakS14}, often leading to optimal performance. 
Grigoriev et al. \cite{grigoriev2002complexity} studied the proof complexity of various problems using different semi-algebraic proof systems. This fundamental work has shown that problems admitting proofs of very large static degree can admit a compact dynamic proof. 
While most previous works has focused on understanding the power of bounded-degree static proofs, there has been very little work on devising strategies to search for \textit{dynamic} proofs, and our work is a first step in this direction.

Recent works have also studied machine learning strategies for automated theorem proving \cite{bansal2019holist, huang2018gamepad, kaliszyk2018reinforcement}. Most such  works build on existing theorem provers (assuming the existence of low-level pre-defined \textit{tactics}) and seek to improve the choice of these tactics. In contrast, our work does not rely on existing theorem provers and instead uses elementary inference rules in the context of semi-algebraic systems. We see these two lines of works as complementary, as building improved provers for polynomial inequalities can provide a crucial tactic that integrates into general ATP systems. We finally note that prior works have applied neural networks to combinatorial optimization problems \cite{bengio2018machine}, such as the satisfiability problem \cite{selsam2018learning}. 
While such techniques seek to show the \textit{existence} of good-quality feasible points (e.g., a satisfying assignment), we emphasize that we focus here on proving statements \textit{for all} values in a set (e.g., showing the \emph{non}existence of any satisfying assignment) -- i.e., $\exists$ vs $\forall$.  
Finally, we note that the class of polynomial optimization contains combinatorial optimization problems as a special case. 

\textbf{Notations. } We let $\RR[\xx]$ denote the ring of multivariate polynomials in  $\xx=(x_1,\ldots,x_n)$. For $\alpha \in \NN^n$ and $\xx = (x_1, \dots, x_n)$, we let $\xx^\alpha = x_1^{\alpha_1} \cdots x_n^{\alpha_n}$.  The degree of a monomial $\xx^{\alpha}$ is $|\alpha|=\sum_{i=1}^n \alpha_i$. The degree of any polynomial in $\RR[\xx]$ is the largest degree of any of its monomials. For $n \in \NN$, we use $[n]$ to denote the set $\{1, \dots, n\}$. We use $|\cdot|$ to denote the cardinality of a finite set.

\section{Problem modeling using polynomials}
\label{sec:examples_polynomials}

To illustrate the scope of this paper, we review the connection between optimization problems and proving the non-negativity of polynomials.
We also describe the example of the stable set problem, which we will use as a running example throughout the paper. 

\textbf{Polynomial optimization.} 
A general polynomial optimization problem takes the form
\begin{equation}
    \label{eq:polyoptpb}
\text{maximize} \quad f(\xx) \quad \text{ subject to } \quad \xx \in \mathcal{S}.
\end{equation}
where $f(\xx)$ is a polynomial and $\mathcal{S}$ is a semi-algebraic set defined using polynomial equations and inequalities $\mathcal{S} = \left\{ \xx \in \RR^n : g_i(\xx) \geq 0, h_j(\xx) = 0 \; \forall i,j \right\}$, where $ g_i, h_j$ are arbitrary polynomials. Such problem subsumes many optimization problems as a special case. For example using the polynomial equality constraints $x_i^2 = x_i$ restricts $x_i$ to be an integer in $\{0,1\}$. As such, integer programming is a special case of \eqref{eq:polyoptpb}. Problem \eqref{eq:polyoptpb} can also model many other  optimization problems that arise in theory and practice, see e.g., \cite{lasserrebook}.

\textbf{Optimization and inequalities.} In this paper we are interested in proving \emph{upper bounds} on the optimal value of \eqref{eq:polyoptpb}. Proving an upper bound of $\gamma$ on the optimal value of \eqref{eq:polyoptpb} amounts to proving that 
\begin{equation}
    \label{eq:ubpolyopt}
\forall \xx \in \mathcal{S}, \quad
\gamma - f(\xx) \geq 0.
\end{equation}
We are looking at proving such inequalities using semi-algebraic proof systems. Therefore, developing tractable approaches to proving nonnegativity of polynomials on semialgebraic sets has important consequences on polynomial optimization.
\begin{remark}
We note that proving an upper bound on the value of \eqref{eq:polyoptpb} is more challenging than proving a lower bound. Indeed, to prove a lower bound on the value of the maximization problem \eqref{eq:polyoptpb} one only needs to exhibit a feasible point $\xx_0 \in \mathcal{S}$; such a feasible point implies that the optimal value is $\geq f(\xx_0)$. In contrast, to prove an upper bound we need to prove a polynomial inequality, valid for all $\xx \in \mathcal{S}$ (notice the $\forall$ quantifier in \eqref{eq:ubpolyopt}).
\end{remark}

\textbf{Stable sets in graphs.} We now give an example of a well-known combinatorial optimization problem, and explain how it can be modeled using polynomials. Let $G = (V, E)$ denote a graph of $n = |V|$ nodes. A \emph{stable set} $S$ in $G$ is a subset of the vertices of $G$ such that for every two vertices in $S$, there is no edge connecting the two. The \textit{stable set} problem is the problem of finding a stable set with largest cardinality in a given graph.
This problem can be formulated as a polynomial optimization problem as follows:
\begin{equation}
    \label{eq:stablesetpb}
\begin{array}{ll}
\underset{\xx \in \RR^n}{\mbox{maximize}}   & \sum_{i=1}^n x_i \\
\mbox{subject to} &   x_i x_j = 0 \text{ for all } (i, j) \in E, \\
                  & x_i^2 = x_i \text{ for all } i \in \{1,\ldots,n\}. \\
\end{array}
\end{equation}
The constraint $x_i^2 = x_i$ is equivalent to $x_i \in \{0, 1\}$. The variable $\xx \in \RR^n$ is interpreted as the characteristic function of $S$: $x_i = 1$ if and only if vertex $i$ belongs to the stable set $S$.
The cardinality of $S$ is measured by $\sum_{i=1}^n x_i$, and the constraint $x_i x_j = 0$ for $ij \in E$ disallows having two nodes in $S$ that are connected by an edge. Finding a stable set of largest size is a classical NP-hard problem, with many diverse applications \cite{lovasz1979shannon, schrijver2003combinatorial}. As explained earlier for general polynomial optimization problems, showing that there is \emph{no} stable set of size larger than $\gamma$ corresponds to showing that $\gamma - \sum_{i=1}^n x_i \geq 0$ for all $\xx$ verifying the constraints of \eqref{eq:stablesetpb}.

\section{Static and dynamic semi-algebraic proofs}
\label{sec:static_vs_dynamic}

A semi-algebraic proof system is defined by elementary \textit{inference rules}, which produce non-negative polynomials. Specifically, a proof consists in applying these inference rules starting from a set of axioms $g_i(\xx) \geq 0, h_j(\xx) = 0$ until we reach a desired inequality $p \geq 0$.\footnote{In the setting discussed in Section \ref{sec:examples_polynomials}, the desired inequality is $p = \gamma - f \geq 0$, where $f$ is the objective function of the optimization problem in \eqref{eq:polyoptpb}.} 

In this paper, we will focus on proving polynomial inequalities valid on the hypercube $[0,1]^n = \{\xx \in \RR^n : 0 \leq x_i \leq 1, \; \forall i=1,\ldots,n\}$. As such, we consider the following inference rules, which appear in the so-called Lov{\'a}sz-Schrijver (LS) proof system \cite{lovasz1991cones} as well as in the Sherali-Adams framework \cite{sheraliadams}: 
\begin{align}
\label{eq:ls_proof_system}
\frac{g \geq 0}{x_i g \geq 0} \quad
\frac{g \geq 0}{(1-x_i) g \geq 0} \quad
\frac{g_i \geq 0}{\sum_{i} \lambda_i g_i \geq 0, \forall \lambda_i \geq 0},
\end{align}
where $\frac{A}{B}$ denotes that $A$ implies $B$.  The \textit{proof} of a statement (i.e., non-negativity of a polynomial $p$) consists in the \textit{composition} of these elementary inference rules, which exactly yields the desired polynomial $p$. Starting from the axiom $1 \geq 0$, the composition of inference rules in Eq. (\ref{eq:ls_proof_system}) yields functions of the form $\sum_{\alpha, \beta} \lambda_{\alpha, \beta} \xx^{\alpha} (1-\xx)^{\beta}$,
where $\alpha=(\alpha_1, \dots, \alpha_n) \in \NN^n$ and $\beta = (\beta_1, \dots, \beta_n) \in \NN^n$ are tuples of length $n$, and $\lambda_{\alpha, \beta}$ are non-negative coefficients. It is clear that all polynomials of this form are non-negative for all $\xx \in [0,1]^n$, as they consist in a composition of the inference rules \eqref{eq:ls_proof_system}. As such, writing a polynomial $p$ in this form gives a \textit{proof} of non-negativity of $p$ on the hypercube. The following theorem shows that such a proof always exists provided we assume $p(\xx)$ is strictly positive for all $\xx \in [0,1]^n$. In words, this shows that the set of inference rules \eqref{eq:ls_proof_system} forms a complete proof system\footnote{The result is only true for strictly positive polynomials. More precisely, the proof system in \eqref{eq:ls_proof_system} is only refutationally complete.}: 
\begin{theorem}[\cite{krivine1964quelques} Positivstellensatz]
\label{thm:psatz}
Assume $p$ is a polynomial such that $p(\xx) > 0$ for all $\xx \in [0,1]^n$. Then there exists an integer $l$, and nonnegative scalars $\lambda_{\alpha,\beta} \geq 0$ such that 
\begin{equation}
    \label{eq:feqab}
    p(\xx) = \sum_{|\alpha| + |\beta| \leq l} \lambda_{\alpha,\beta} \xx^{\alpha} (1-\xx)^{\beta}.
\end{equation}
\end{theorem}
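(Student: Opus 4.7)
The plan is to combine the tensor-product Bernstein basis on $[0,1]^n$ with a compactness argument. Since $p$ is continuous on the compact set $[0,1]^n$ and strictly positive, there exists $\epsilon > 0$ with $p(\xx) \geq \epsilon$ for all $\xx \in [0,1]^n$.

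Fix an integer $N \geq \deg p$. The Bernstein basis elements
$$\tilde{B}^N_k(\xx) \;=\; \prod_{i=1}^n \binom{N}{k_i}\, x_i^{k_i}(1-x_i)^{N-k_i}, \qquad k \in \{0,\ldots,N\}^n,$$
form a basis for the space of polynomials of multidegree at most $(N,\ldots,N)$, so $p$ has a unique expansion $p(\xx) = \sum_{k} b^N_k \tilde{B}^N_k(\xx)$ for some real coefficients $b^N_k$. Since each basis element $\tilde{B}^N_k$ is a positive scalar multiple of $\xx^k (1-\xx)^{(N,\ldots,N)-k}$, the whole problem reduces to showing that $b^N_k \geq 0$ for $N$ large enough; setting $\lambda_{\alpha,\beta} := b^N_\alpha \prod_i \binom{N}{\alpha_i}$ when $\alpha+\beta = (N,\ldots,N)$ (and $\lambda_{\alpha,\beta}=0$ otherwise) would then yield the claim with $l = nN$.

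The analytic core is to establish the uniform approximation $b^N_k = p(k/N) + O(1/N)$ in $k \in \{0,\ldots,N\}^n$. I would prove this monomial by monomial: for a single-variable monomial $x^d$, a direct computation identifies its degree-$N$ Bernstein coefficient as the ratio of falling factorials $k^{\underline d}/N^{\underline d}$, which differs from $(k/N)^d$ by $O(1/N)$ uniformly in $k$. Summing over the (finitely many, $N$-independent) monomials of $p$ and taking tensor products across the $n$ coordinates extends the estimate to the multivariate setting, with a constant depending only on $p$.

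Putting the pieces together, $p(k/N) \geq \epsilon$ at every grid point, so the uniform estimate forces $b^N_k \geq \epsilon/2 > 0$ as soon as $N$ is large enough, yielding the desired representation. The main obstacle is precisely this uniform estimate: the Bernstein coefficients $b^N_k$ are \emph{not} the grid samples $p(k/N)$ (the polynomial $\sum_k p(k/N)\tilde{B}^N_k$ is the Bernstein \emph{approximation} of $p$, which converges to $p$ but is generally not equal to it), so one has to extract the $b^N_k$ from an exact change of basis and carefully track the discrepancy between the two.
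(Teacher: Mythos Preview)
The paper does not prove this theorem; it is quoted as a classical result of Krivine and merely cited. There is therefore no ``paper's own proof'' to compare against.

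That said, your proposal is a correct and standard route to the result. The identification of the degree-$N$ Bernstein coefficient of $x^d$ as $k^{\underline d}/N^{\underline d}$ is exactly right (it follows from the identity $k^{\underline d}\binom{N}{k}=N^{\underline d}\binom{N-d}{k-d}$ and the binomial theorem), and the uniform estimate $k^{\underline d}/N^{\underline d}=(k/N)^d+O(1/N)$ over $k\in\{0,\dots,N\}$ is elementary for fixed $d$. Tensorising across the $n$ coordinates and summing over the finitely many monomials of $p$ yields $b^N_k=p(k/N)+O(1/N)$ with a constant depending only on $p$, after which the strict lower bound $p\geq\epsilon$ forces all $b^N_k>0$ once $N$ is large. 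Your bookkeeping $\lambda_{\alpha,\beta}=b^N_\alpha\prod_i\binom{N}{\alpha_i}$ for $\alpha+\beta=(N,\dots,N)$ and $l=nN$ is consistent. This is essentially the Bernstein (or Handelman-type) argument for the hypercube; there is no gap, and your closing remark correctly distinguishes the exact Bernstein \emph{coefficients} from the Bernstein \emph{approximation} samples, which is precisely the point one must be careful about.
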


\begin{wrapfigure}{R}{5cm}
\centering
\includegraphics[scale=0.3]{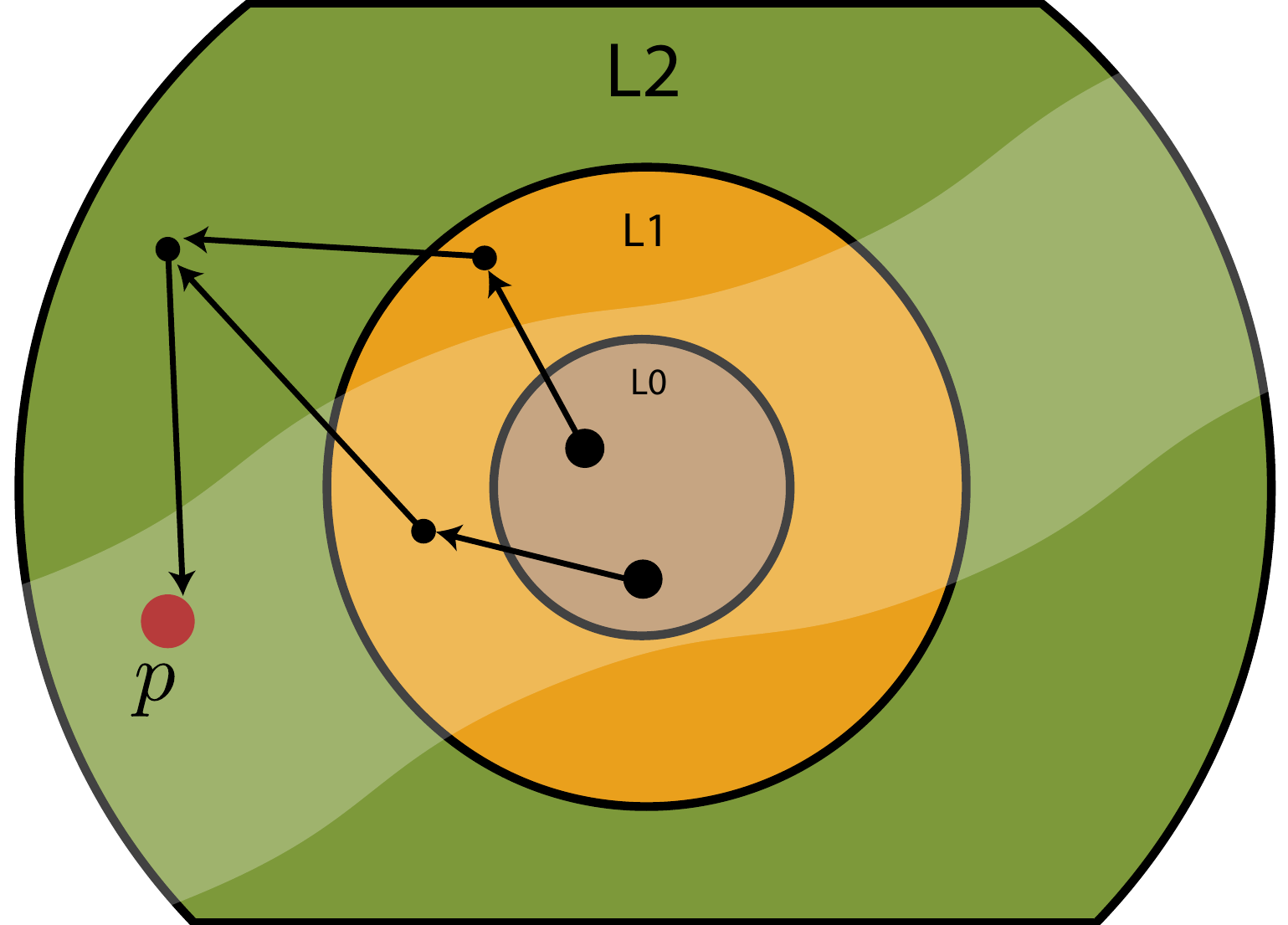}
\caption{\label{fig:dynamic_static} Illustration of a dynamic \textit{vs.} static proof. Each concentric circle depicts the set of polynomials that can be proved non-negative by the $l$'th level of the hierarchy.
The wiggly area is the set of polynomials of degree e.g., $1$.
A dynamic proof (black arrows) of $p \geq 0$ seeks an (adaptive) sequence of inference rules that goes from the initial set of axioms (dots in $L0$) to target $p$.}
\end{wrapfigure}

\textbf{Static proofs.} Theorem \ref{thm:psatz} suggests the following approach to proving non-negativity of a polynomial $p(\xx)$: fix an integer $l$ and search for non-negative coefficients $\lambda_{\alpha,\beta}$ (for $|\alpha|+|\beta|\leq l$) such that \eqref{eq:feqab} holds. This \textit{static} proof technique is one of the most widely used approaches for finding proofs of polynomial inequalities, as it naturally translates to solving a convex optimization problem \cite{laurent2003comparison}. In fact, \eqref{eq:feqab} is a \emph{linear} condition in the unknowns $\lambda_{\alpha,\beta}$, as the functional equality of two polynomials is equivalent to the  equality of the coefficients of each monomial. Thus,  finding such coefficients is a linear program where the number of variables is equal to the number of tuples $(\alpha,\beta) \in \NN^n \times \NN^n$ such that $|\alpha|+|\beta|\leq l$, i.e., of order $\Theta(n^{l})$ for $l$ constant. The collection of these linear programs gives a \emph{hierarchy}, indexed by $l \in \NN$, for proving non-negativity of polynomials. Theorem \ref{thm:psatz} shows that as long as $p > 0$ on $[0,1]^n$ there exists $l$ such that $p$ can be proved nonnegative by the $l$'th level of the hierarchy. However, we do not know \textit{a priori} the value of $l$. In fact this value of $l$ can be much larger than the degree of the polynomial $p$.
In other words, in order to prove the non-negativity of a low-degree polynomial $p$, one may need to manipulate high-degree polynomial expressions and leverage cancellations in the right-hand side of \eqref{eq:feqab} -- see illustration below for an example.

\textbf{Dynamic proofs.} For large values of $l$, the linear program associated to the $l$'th level of the hierarchy is prohibitively large to solve. To remedy this, we propose to search for \emph{dynamic} proofs of non-negativity. 
This technique relies on proving intermediate \textit{lemmas} in a sequential way, as a way to find a concise proof of the desired objective. 
Crucially, the choice of the \textit{intermediate lemmas} is strongly problem-dependent -- it depends on the target polynomial $p$, in addition to the axioms and previously derived lemmas. 
This is in stark contrast with the \textit{static} approach, where hierarchies are problem-independent (e.g., they are obtained by limiting the degree of proof generators, the $\xx^{\alpha} (1-\xx)^{\beta}$ in our case).
In spite of the benefits of a dynamic proof system, searching for these proofs is a challenging problem on its own, where one has to decide on inference rules applied at each step of the proof.
Finally, we also believe such a \textit{dynamic} proving approach is more aligned with \textit{human reasoning}, which is also a sequential process where intuition plays an important role in deriving new lemmas by applying suitable inference rules that lead to \textit{interpretable} proofs.

\textbf{Illustration.} 
To illustrate the difference between the static and dynamic proof systems, consider the stable set problem in Sect. \ref{sec:examples_polynomials} on the complete graph on $n$ nodes, where each pair of nodes is connected. It is clear that the maximal stable set has size 1; this can be formulated as follows:\footnote{Note that redundant inequalities $x_i \geq 0$ and $1-x_i \geq 0$ have been added for sake of clarity in what follows.} 
\begin{equation}
    \label{eq:toprovestblset}
\begin{cases}
x_i^2 = x_i, \; i=1,\ldots,n\\
x_i x_j = 0, \; \forall i \neq j \\
x_i \geq 0, 1 - x_i \geq 0, \; i=1,\ldots,n
\end{cases}
\quad
\Rightarrow
\quad
1 - \sum_{i=1}^n x_i \geq 0.
\end{equation}
In the static framework, we seek to express the polynomial $1-\sum_{i=1}^n x_i$ as in \eqref{eq:feqab}, \emph{modulo} the equalities 
$x_i x_j = 0$.
One can verify that
\begin{equation}
    \label{eq:proofstaticKn}
1 - \textstyle\sum_{i=1}^n x_i \; = \; \textstyle\prod_{i=1}^n (1-x_i)
\quad
\mod
\quad
(x_i x_j = 0, \; \forall i\neq j).
\end{equation}
The proof in Equation \eqref{eq:proofstaticKn} is a static proof of degree $n$ because it involves the degree $n$ product $\prod_{i=1}^n (1-x_i)$. This means that the proof \eqref{eq:proofstaticKn} will only be found at level $n$ of the static hierarchy, which is a  linear program of size exponential in $n$. One can further show that it is necessary to go to level at least $n$ to find a proof of \eqref{eq:toprovestblset} (cf. Supp. Mat).

In contrast, one can provide a dynamic proof of the above where the degree of any intermediate lemma is at most two.
To see why, it suffices to multiply the polynomials $1-x_{i}$ sequentially, 
each time eliminating the degree-two terms using the equalities $x_{i} x_{j} = 0$ for $i \neq j$. The dynamic proof proceeds as follows (note that no polynomial of degree greater than two is ever formed).
\[
\begin{array}{l}
1-x_{1} \geq 0 \xrightarrow{\substack{\text{multiply by}\\\text{$1-x_{2}\geq0$}}} (1-x_{1})(1-x_{2}) \geq 0 \xrightarrow{\substack{\text{reduce using}\\\text{$x_{1} x_{2} = 0$}}} 1-x_{1}-x_{2} \geq 0\\
\qquad \quad \quad \xrightarrow{\substack{\text{multiply by}\\\text{$1-x_{3}\geq0$}}} (1-x_{1}-x_{2})(1-x_{3}) \geq 0
\xrightarrow{\substack{\text{reduce using}\\\text{$x_{1} x_{3} = x_2 x_3 = 0$}}}
1-x_{1}-x_{2}-x_{3} \geq 0\\
\qquad \quad \qquad \quad \vdots\\
\qquad \quad \quad \xrightarrow{\substack{\text{multiply by}\\\text{$1-x_{n}\geq0$}}} (1-x_{1}-\ldots-x_{n-1})(1-x_{n}) \geq 0
\xrightarrow{\substack{\text{reduce using}\\\text{$x_{i} x_{n} = 0$ for $i<n$}}}
1-x_{1}-\ldots-x_{n} \geq 0.
\end{array}
\]

\section{Learning dynamic proofs of polynomials}
\label{sec:method}

\subsection{Reinforcement learning framework for semi-algebraic proof search}
We model the task of finding dynamic proofs as an interaction between the agent and an environment, formalized as a Markov Decision Process (MDP), resulting in a sequence of states, actions and observed rewards. The agent state $s_t$ at time step $t$ is defined through the triplet $(f, \mathcal{M}_t, \mathcal{E}_t)$, where:
\begin{itemize}[topsep=0mm,itemsep=0.1mm,partopsep=0mm,parsep=0.2mm,leftmargin=5.0mm]
    \item $\mathcal{M}_t$ denotes the \textit{memory} at $t$; \textit{i.e.,} the set of polynomials that are known to be non-negative at $t$. This contains the set of polynomials that are \textit{assumed} to be non-negative (i.e., \textit{axioms} $g_i$), as well as intermediate steps (i.e., \textit{lemmas}), which are derived from the axioms through inference rules, 
    \item $\mathcal{E}_t$ denotes the set of \textit{equalities}; \textit{i.e.,} the set of polynomials identically equal to zero,
    \item $f$ denotes the objective polynomial to bound (cf Section \ref{sec:examples_polynomials}). 

\end{itemize}

At each time $t$, the agent selects an action $a_t$ from a set of legal actions $\mathcal{A}_t$, obtained by applying one or more inference rules in Eq. \eqref{eq:ls_proof_system} to elements in $\mathcal{M}_t$.\footnote{In practice, we limit ourselves to the first two inference rules (i.e., multiplication by $x_i$ and $1-x_i$), and find linear combinations using the LP strategy described in Section \ref{sec:rewards}. This yields action spaces $\mathcal{A}_t$ of size $2 n |\mathcal{M}_t|$.} Observe that since elements in $\mathcal{M}_t$ are non-negative, the polynomials in $\mathcal{A}_t$ are also non-negative. 
The selected action $a_t \in \mathcal{A}_t$ is then appended to the memory $\mathcal{M}_{t+1}$ at the next time step. 
After selecting $a_t$, a reward $r_t$ is observed, indicating how close the agent is to finding the proof of the statement, with higher rewards indicating that the agent is ``closer'' to finding a proof -- see Sect. \ref{sec:rewards} for more details. 

The goal of the agent is to select actions that maximize future returns $R_t = \mathbb{E} [\sum_{t'=t}^{T} \gamma^{t'-t} r_{t'}]$, where $T$ indicates the length of an episode, and $\gamma$ is the discount factor. We use a deep reinforcement learning algorithm where the action-value function is modeled using a deep neural network $q_{\theta} (s,a)$. Specifically, the neural network takes as input a state-action pair, and outputs an estimate of the return; we use the DQN \cite{mnih2013playing} algorithm for training, which leverages a replay memory buffer for increased stability \cite{lin1992self}. We refer to \cite[Algorithm 1]{mnih2013playing} for more details about this approach.

Note that in contrast to many RL scenarios, the action space here grows with $t$, as larger memories mean that more lemmas can be derived. The large action space makes the task of finding a dynamic proof particularly challenging; we therefore rely on dense rewards (Sect. \ref{sec:rewards}) and specialized architectures (Sect. \ref{sec:q_network_with_symmetries}) for tackling this problem.

\subsection{Reward signal}

\label{sec:rewards}
We now describe the reward signal $r_t$. 
One potential choice is to assign a positive reward ($r_t > 0$) when the objective $\gamma^\ast \geq f$ is reached (where $\gamma^\ast$ is the optimal bound) and zero otherwise. However,  this suffers from two important problems: 1) the reward is  \textit{sparse}, which makes learning difficult, 2) this requires the knowledge of the optimal bound $\gamma^*$. 
Here, we rely instead on a \textit{dense}, and \textit{unsupervised} reward scheme. Specifically, at each step $t$, we solve the following linear program:
\begin{align}
\label{eq:lp_reward}
\min_{\gamma_t, \{\lambda\}} \gamma_t \;\; \text{ subject to } \;\; \gamma_t - f = \sum_{i=1}^{|\mathcal{M}_t|} \lambda_{i} m_i,\;\; \lambda \geq 0,
\end{align}
where $\{m_i\}$ denote the polynomials in $\mathcal{M}_t$. Note that the constraint in Eq. (\ref{eq:lp_reward}) is a \textit{functional} equality of two polynomials, which is equivalent to the equality of the coefficients of the polynomials. In words, Eq. (\ref{eq:lp_reward}) computes the optimal upper  bound $\gamma_t$ on $f$ that can be derived through a non-negative linear combination of elements in the memory; in fact, since $\sum_{i=1}^{|\mathcal{M}_t|} \lambda_i m_i$ is non-negative, we have $f \leq \gamma_t$. Crucially, the computation of the bound in Eq. \eqref{eq:lp_reward} can be done very efficiently, as $\mathcal{M}_t$ is kept of small size in practice (e.g., $|\mathcal{M}_t| \leq 200$ in the experiments).

Then, we compute the reward as the \textit{relative improvement} of the bound: $r_{t} = \gamma_{t+1} - \gamma_t$, where $r_t$ is the reward observed after taking action $a_t$. Note that positive reward is observed only when the chosen action $a_t$ leads to an \textit{improvement of the current bound}. We emphasize that this reward attribution scheme alleviates the need for \textit{any supervision} during our training procedure; specifically, the agent does not require human proofs or even estimates of bounds for training. 

\subsection{Q-network with symmetries}
\label{sec:q_network_with_symmetries}

The basic objects we manipulate are polynomials and sets of polynomials, which impose natural symmetry requirements. We now describe how we build in symmetries in our Q-network $q_{\theta}$.

Our Q-network $q_{\theta}$, takes as input the state $s_t = (f, \mathcal{M}_t, \mathcal{E}_t)$, as well as the action polynomial $a_t$. We represent polynomials as vectors of coefficients of size $N$, where $N$ is the number of possible monomials. While sets of polynomials (e.g., $\mathcal{M}_t$) can be encoded with a matrix of size $c \times N$, where $c$ denotes the cardinality of the set, such an encoding does not take into account the \textit{orderless} nature of sets. 
We, therefore, impose our Q-value function to be invariant to the order of enumeration of elements in $\mathcal{M}$, and $\mathcal{E}$; that is, we require that the following hold for any permutations $\pi$ and $\chi$:
\[
\textbf{Symmetry I (orderless sets).} \quad q_{\theta} \left( \{ m_i \}_{i=1}^{|\mathcal{M}_t|}, \{ e_j \}_{j=1}^{|\mathcal{E}_t|}, f, a \right) =
q_{\theta} \left( \{ m_{\pi(i)} \}_{i=1}^{|\mathcal{M}_t|}, \{ e_{\chi(j)} \}_{j=1}^{|\mathcal{E}_t|}, f, a \right).
\]
To satisfy the above symmetry, we consider value functions of the form:
\[
q_{\theta} \left( \{ m_i \}_{i=1}^{|\mathcal{M}_t|}, \{ e_j \}_{j=1}^{|\mathcal{E}_t|}, f, a \right) = \zeta_{\theta^{(3)}}\left( \sigma(V) , \sigma(W)  \right),
\]
where $V = \{v_{\theta^{(1)}}(m_i, f, a)\}_{i=1}^{|\mathcal{M}_t|}, W = \{v_{\theta^{(2)}}(e_j, f, a)\}_{j=1}^{|\mathcal{E}_t|}$, $v_{\theta^{(1)}}$ and $v_{\theta^{(2)}}$ are trainable neural networks with additional symmetry constraints (see below), 
$\sigma$ is a symmetric function of the arguments (e.g., max, sum), and $\zeta_{\theta^{(3)}}$ is a trainable neural network.

In addition to the above symmetry, $v_{\theta}$ has to be well chosen in order to guarantee invariance under relabeling of variables (that is, $x_i \rightarrow x_{\pi(i)}$ for any permutation $\pi$). In fact, the variable names do not have any specific meaning \emph{per se}; relabeling all polynomials in the same way results in the exact same problem. We therefore require that the following constraint is satisfied for any permutation $\pi$:
\begin{flalign}
\textbf{Symmetry II (variable relabeling).} & \quad v_{\theta} (m, f, a) = v_{\theta} (\pi m, \pi f, \pi a), 
\end{flalign}
where $\pi m$ indicates a permutation of the variables in $m$ using $\pi$. For example, if $\pi$ is such that $\pi(1) = 2, \pi(2) = 3$ and $\pi(3) = 1$, and $m = x_1 + 2 x_1 x_3$ then $\pi m = x_2 + 2 x_1 x_2$. Note that in the above constraint, the \textit{same} permutation $\pi$ is acting on $m$, $f$ and $a$.

We now describe how we impose this symmetry. Given two triplets of monomials $(\mathbf{x}^{\alpha_1}, \mathbf{x}^{\alpha_2}, \mathbf{x}^{\alpha_3})$ and $(\mathbf{x}^{\beta_1}, \mathbf{x}^{\beta_2}, \mathbf{x}^{\beta_3})$, we say that these two triplets are equivalent (denoted by the symbol $\sim$) iff there exists a permutation $\pi$ such that $\beta_i = \pi(\alpha_i)$ for $i=1,2,3$. For example, $(x_1 x_2, x_2^2, x_2 x_3) \sim (x_1 x_3, x_3^2, x_2 x_3)$. The \textit{equivalence class} [$(\mathbf{x}^{\alpha_1}, \mathbf{x}^{\alpha_2}, \mathbf{x}^{\alpha_3})$] regroups all triplets of monomials that are equivalent to $(\mathbf{x}^{\alpha_1}, \mathbf{x}^{\alpha_2}, \mathbf{x}^{\alpha_3})$. We denote by $\mathscr{E}$ the set of all such equivalence classes. 
Our first step to construct $v_{\theta}$ consists in mapping the triplet $(m, f, a)$ to a feature vector which respects the variable relabeling symmetry. To do so, let $m, f, a$ be polynomials in $\mathbb{R}[\mathbf{x}]$; we consider a feature function that is \textit{trilinear} in $(m, f, a)$; that is, it is linear in each argument $m$, $f$ and $a$. For such a function, $T: \mathbb{R}[\mathbf{x}] \times \mathbb{R}[\mathbf{x}] \times \mathbb{R}[\mathbf{x}]  \rightarrow \mathbb{R}^s$ (where $s$ denotes the feature size), we have:
$
T (m, f, a) = \sum_{\alpha, \beta, \gamma} m_{\alpha} f_{\beta} a_{\gamma} T(\xx^{\alpha}, \xx^{\beta}, \xx^{\gamma}).
$
If $(\xx^\alpha, \xx^\beta, \xx^\gamma) \sim (\xx^{\alpha'}, \xx^{\beta'}, \xx^{\gamma'})$, then we set $T(\xx^{\alpha}, \xx^{\beta}, \xx^{\gamma}) = T(\xx^{\alpha'}, \xx^{\beta'}, \xx^{\gamma'})$. In other words, the function $T$ has to be constant on each equivalence class. Such a $T$ will satisfy our symmetry constraint that $T(m, f, a) = T(\pi m, \pi f, \pi a)$ for any permutation $\pi$. For example, the above equality  constrains  $T(1, x_1, x_1) = T(1, x_i, x_i)$ for all $i$ since $(1, x_1, x_1) \sim (1, x_i, x_i)$, and $T(x_1, x_2, x_3) = T(x_i, x_j, x_k)$ for $i \neq j \neq k$ as $(x_1, x_2, x_3) \sim (x_i, x_j, x_k)$. Note, however, that $T(1, x_1, x_1) \neq T(1, x_i, x_j)$ for $i \neq j$; in fact, $(1, x_1, x_1) \not\sim (1, x_i, x_j)$.  Finally, we set $v_{\theta} = u_{\theta} \circ T$ where $u_{\theta}$ is a trainable neural network. Fig. \ref{fig:nn_architecture} summarizes the architecture we use for the Q-network. We refer to Supp. Mat. for more details about architectures and practical implementation.

\begin{figure}[t]
\centering
\includegraphics[scale=0.6, page=2]{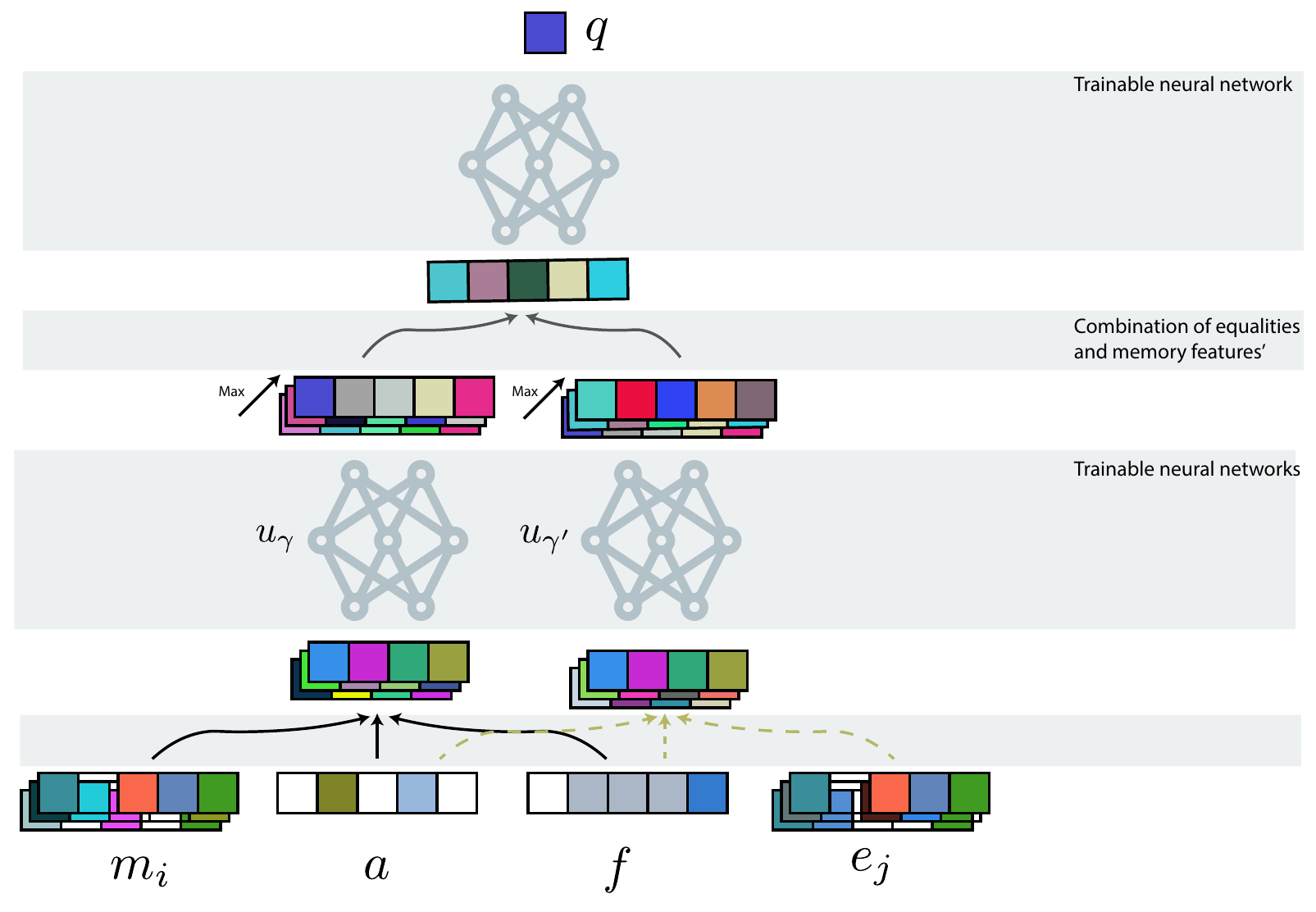}
\caption{\label{fig:nn_architecture}Structure of Q-network. $\{ m_i \}$ denotes the set of \textit{axioms} and \textit{lemmas}, $a$ denotes the action, $f$ is the objective function, and $e_j$ denotes the set of equality polynomials.}
\end{figure}

\vspace{-2mm}
\section{Experimental results}
\vspace{-2mm}

We illustrate our dynamic proving approach on the stable set problem described in Section \ref{sec:examples_polynomials}. This problem has been extensively studied in the polynomial optimization literature \cite{laurent2003comparison}. We evaluate our method against standard linear programming hierarchies considered in this field. The largest stable set in a graph $G$ is denoted $\alpha(G)$.

\textbf{Training setup.} We train our prover on randomly generated graphs of size $n=25$, where an edge between nodes $i$ and $j$ is created with probability $p \in [0.5, 1]$. We seek dynamic proofs using the proof system in Eq. \eqref{eq:ls_proof_system}, starting from the axioms $\{ x_i \geq 0, 1 - x_i \geq 0, i =1,\ldots,n\}$ and the polynomial equalities $x_i x_j = 0$ for all edges $ij$ in the graph and $x_i^2 = x_i$ for all nodes $i$. We restrict the number of steps in the dynamic proof to be at most $100$ steps and limit the degree of any intermediate lemma to 2. We note that our training procedure is unsupervised and does not require prior proofs, or knowledge of $\alpha(G)$ for learning. We use the DQN approach presented in Sect. \ref{sec:method} and provide additional  details about hyperparameters and architecture choices in the Supp.~Mat.

\begin{table*}[t]
	\centering \renewcommand\cellgape{\Gape[2pt]}
	\begin{tabular}{l|cccccc||cc}
		\toprule
	 $n$ & \makecell[c]{Dyn.} &
	 \multicolumn{4}{c}{Static hierarchy}
	 & Random
	 & \multicolumn{2}{c}{Size of LP}
	 \\ 
	 & (deg. 2) & $l=2$ & $l=3$ & $l=4$ & $l=5$ & & Dyn. & Static $l=5$
	 \\ \midrule
 15 &\textbf{3.43} & 7.50 & 5.01 & 3.94 & 3.48 & 5.91 & \textbf{130} & $5.9 \times 10^{3}$ \\ 
 20 & \textbf{3.96} & 10.0 & 6.67 & 5.04 & 4.32 & 8.91 & \textbf{140} & $2.6 \times 10^4$ \\
 25 & \textbf{4.64} & 12.50 & 8.33 & 6.26 & 5.08 & 12.7 & \textbf{150} & $7.6 \times 10^{4}$ \\
 30 & \textbf{5.44} & 15.0 & 10.0 & 7.50 & 6.03 & 15.6 & \textbf{160} & $1.9 \times 10^{5}$ \\ 
 35 & \textbf{6.37} & 17.5 & 11.67 & 8.75 & 7.02 & 19.6 & \textbf{170} & $4.2 \times 10^{5}$ \\
 40 & \textbf{7.23} & 20.0 & 13.33 & 10.0 & 8.00 & 23.5 & \textbf{180} & $8.3 \times 10^5$ \\
 45 &  \textbf{8.14} & 22.5 & 15.0 & 11.25 & 9.00 & 28.1 & \textbf{190} & $1.5 \times 10^6$ \\
 50 & \textbf{8.89} & 25.0 & 16.67 & 12.50 & 10.0 & 31.6 & \textbf{200} & $2.6 \times 10^6$ \\
 \hline
 \end{tabular}
 \caption{\label{tab:comparison_static_dynamic_random} Evaluation of different methods on 100 randomly sampled problems on the maximal stable set problem. For each method, the average estimated bound is displayed (lower values correspond to better -- i.e., tighter -- bounds). Moreover, the average size of the linear program in which the proof is sought is reported in the last two columns. The proof size is limited to $100$ for the dynamic proof, leading to an LP of size $100 + 2n$, as the problem has $2n$ inequality axioms ($x_i \geq 0, 1-x_i \geq 0$). Note that the static linear program at level $l$ cannot give a bound smaller than $n/l$; we prove this result in Theorem 1 in Supp.~Mat.}
\end{table*}

We compare our approach to the following static hierarchy of linear programs indexed by $l$:
\begin{equation}
    \label{eq:salp}
\text{min.} \; \gamma \; \text{ s.t. } \quad \gamma - \sum_{i=1}^n x_i = \sum_{|\alpha|+|\beta|\leq l} \lambda_{\alpha,\beta} \xx^{\alpha} (1-\xx)^{\beta} \; \mod \; \left(\begin{array}{ll} x_i x_j = 0, \; ij \in E\\
x_i^2 = x_i, \; i \in V
\end{array}\right).
\end{equation}
This hierarchy corresponds to the level $l$ of the Sherali-Adams hierarchy applied to the maximum stable set problem \cite[Section 4]{laurent2014handelman}, which is one of the most widely studied hierarchies for combinatorial optimization \cite{laurent2003comparison}. 
Observe that the linear program \eqref{eq:salp} has $\Theta(n^l)$ variables and constraints for $l$ constant. By completeness of the hierarchy, we know that solving the linear program \eqref{eq:salp} at level $l=n$ yields the exact value $\alpha(G)$ of the maximum stable set.

\textbf{Results.} Table \ref{tab:comparison_static_dynamic_random} shows the results of the proposed \textit{dynamic} prover on a test set consisting of random graphs of different sizes.\footnote{Despite training the network on graphs of fixed size, we can test it on graphs of any size, as the embedding dimension is independent of $n$. In fact, it is equal to the number of equivalence classes $|\mathscr{E}|$.} 
We compare the value obtained by the dynamic prover with a random prover taking random legal actions (from the considered proof system), as well as with the Sherali-Adams hierarchy \eqref{eq:salp}. The reported values correspond to an average over a set of 100 randomly generated graphs. We note that for all methods, bounds are accompanied with a \textit{formal, verifiable, proof}, and are hence correct by definition.

Our dynamic polynomial prover is able to prove an upper bound on $\alpha(G)$ that is better than the one obtained by the Sherali-Adams hierarchy with a linear program that is smaller by several orders of magnitude. For example on graphs of 50 nodes, the Sherali-Adams linear program at level $l=5$ has more than two million variables, and gives an upper bound on $\alpha(G)$ that is worse than our approach which only uses a linear program of size 200. This highlights the huge benefits that dynamic proofs can offer, in comparison to hierarchy-based static approaches.  
We also see that our agent is able to learn useful strategies for proving polynomial inequalities, as it significantly outperforms the random agent. We emphasize that while the proposed agent is only trained on graphs of size $n=25$, it still outperforms all other methods for larger values of $n$ showing good out-of-distribution generalization. Note finally  that the proposed architecture which incorporates symmetries (as described in Sect. \ref{sec:q_network_with_symmetries}) significantly outperforms other generic architectures, as shown in the Supp. Mat.

Table \ref{tab:example_proofs_out} provides an example of a proof produced by our automatic prover, showing that the largest stable set in the cycle graph on 7 nodes is at most 3. Despite the symmetric nature of the graph (unlike random graphs in the training set), our proposed approach leads to human interpretable, and relatively concise proofs. In contrast, the  static approach involves searching for a proof in a very large algebraic set.

\begin{table}[t]
    \centering
    \begin{tabular}{p{4cm}p{8cm}}
  \raisebox{-3.2cm}{\includegraphics[width=4.5cm]{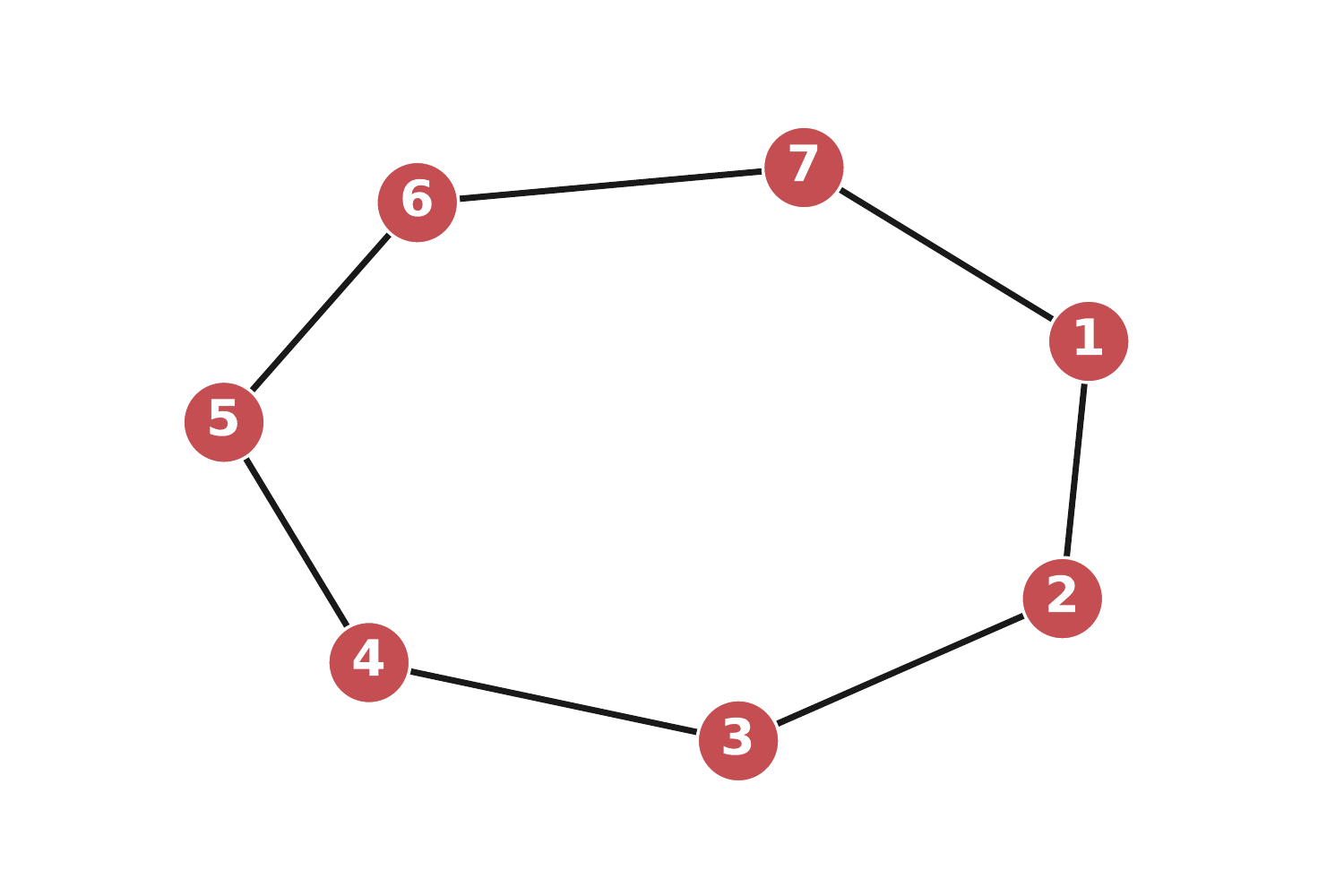}}     & 
  \textbf{Proof that $3 - \sum_{i=1}^7 x_i \geq 0$:}
\begin{lstlisting}[escapechar=!,frameshape=nnnn]
[Step 0] 0 <= !\textcolor{red}{-x2 - x3 + 1}! = !\textcolor{blue}{(-x3 + 1)}! * !\textcolor{blue}{(-x2 + 1)}!
[Step 1] 0 <= !\textcolor{red}{-x5 - x6 + 1}! = !\textcolor{blue}{(-x6 + 1)}! * !\textcolor{blue}{(-x5 + 1)}!
[Step 2] 0 <= !\textcolor{red}{-x4 - x5 + 1}! = !\textcolor{blue}{(-x4 + 1)}! * !\textcolor{blue}{(-x5 + 1)}!
[Step 3] 0 <= !\textcolor{red}{-x1 - x7 + 1}! = !\textcolor{blue}{(-x7 + 1)}! * !\textcolor{blue}{(-x1 + 1)}!
[Step 4] 0 <= !\textcolor{red}{-x1 - x2 + 1}! = !\textcolor{blue}{(-x1 + 1)}! * !\textcolor{blue}{(-x2 + 1)}!
[Step 5] 0 <= !\textcolor{red}{-x2*x4 - x2*x5 + x2}! = !\textcolor{red}{[Step 2]}! * !\textcolor{blue}{(x2)}!
[Step 6] 0 <= !\textcolor{red}{x1*x5 - x1 + x2*x5 - x2 - x5 + 1}! = !\textcolor{red}{[Step 4]}! * !\textcolor{blue}{(-x5 + 1)}!
[Step 7] 0 <= !\textcolor{red}{x5*x7 - x5 - x6 - x7 + 1}! = !\textcolor{red}{[Step 1]}! * !\textcolor{blue}{(-x7 + 1)}!
[Step 8] 0 <= !\textcolor{red}{x2*x4 - x2 - x3 - x4 + 1}! = !\textcolor{red}{[Step 0]}! * !\textcolor{blue}{(-x4 + 1)}!
[Step 9] 0 <= !\textcolor{red}{-x1*x5 - x5*x7 + x5}! = !\textcolor{red}{[Step 3]}! * !\textcolor{blue}{(x5)}!
0 <= 1 * !\textcolor{red}{[Step 5]}! + 1 * !\textcolor{red}{[Step 7]}! + 1 * !\textcolor{red}{[Step 8]}!
   + 1 * !\textcolor{red}{[Step 9]}! + 1 * !\textcolor{red}{[Step 6]}! = !$3 - \sum_{i=1}^{7} x_i. \qed$!
\end{lstlisting}
    \end{tabular}
    \caption{An example of proof generated by our agent. Axioms are shown in blue, and derived polynomials (i.e., intermediate lemmas) are shown in red. Note that coefficients in the proof are all \textit{rational}, leading to an exact and fully verifiable proof. See more examples of proofs in the Supp.~Mat.}
    \label{tab:example_proofs_out}
    \vspace{-0.5cm}
\end{table}

\vspace{-2mm}
\section{Conclusion}

Existing hierarchies for polynomial optimization currently rely on a static viewpoint of algebraic proofs  and leverage the convexity of the search problem. We propose here a new approach for searching for a dynamic proof using machine learning based strategies. 
The framework we propose for proving inequalities on polynomials leads to more \textit{natural}, \textit{interpretable} proofs, and significantly outperforms static proof techniques. 
We believe that augmenting polynomial systems with ML-guided dynamic proofs will have significant impact in application areas such as control theory, robotics, verification, where many problems can be cast as proving polynomial inequalities. One very promising avenue for future research is to extend our dynamic proof search method to other more powerful semi-algebraic proof systems; e.g., based on semi-definite programming.

\bibliographystyle{alpha}
\bibliography{bibliography}

\appendix
\section{Proofs}

\subsection{Static hierarchy for stable set.}

We first prove that the static proof on the stable set problem cannot achieve a better bound than $n/l$, a result that is highlighted in the experimental section of the main paper.

\begin{theorem}
\label{thm:nec}
Let $\gamma_{l}$ be the value of the linear program in Equation (10) of the main paper; that is,
\begin{equation}
\gamma_l = \text{min.} \; \gamma \; \text{ s.t. } \quad \gamma - \sum_{i=1}^n x_i = \sum_{|\alpha+\beta|\leq l} \lambda_{\alpha,\beta} \xx^{\alpha} (1-\xx)^{\beta} \; \mod \; \left(\begin{array}{ll} x_i x_j = 0, \; ij \in E\\
x_i^2 = x_i, \; i \in V
\end{array}\right).
\end{equation}

Then $\gamma_{l} \geq n/l$.
\end{theorem}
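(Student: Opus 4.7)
The plan is to first reduce to the complete graph $K_n$ by a monotonicity argument, and then exhibit a symmetric ``pseudo-expectation'' that forces $\gamma_l(K_n) \geq n/l$ directly.

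\textbf{Step 1 (reduction to $K_n$).} The equality constraints in the LP are imposed via the ideal $I_G$ generated by $\{x_i x_j : ij \in E(G)\} \cup \{x_i^2 - x_i : i \in V\}$. Since $E(G) \subseteq E(K_n)$, we have $I_G \subseteq I_{K_n}$, so any feasible pair $(\gamma, \{\lambda_{\alpha,\beta}\})$ for the LP on $G$ is automatically feasible for the LP on $K_n$. Hence $\gamma_l(K_n) \leq \gamma_l(G)$, and it is enough to prove $\gamma_l(K_n) \geq n/l$.

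\textbf{Step 2 (dual certificate for $K_n$).} In the quotient $\mathbb{R}[\xx]/I_{K_n}$ every monomial collapses to one of $\{0,1,x_1,\ldots,x_n\}$ (using $x_i^k \equiv x_i$ and $x_i x_j \equiv 0$ for $i\neq j$), so this quotient has basis $\{1, x_1, \ldots, x_n\}$. I would define a symmetric linear functional $L: \mathbb{R}[\xx] \to \mathbb{R}$ by $L(1) = 1$ and $L(x_i) = 1/l$. By construction $L$ vanishes on $I_{K_n}$ and $L(\sum_i x_i) = n/l$. Applying $L$ to both sides of the LP identity gives
\[
\gamma - n/l \;=\; \sum_{|\alpha|+|\beta|\leq l} \lambda_{\alpha,\beta}\, L\!\left(\xx^\alpha (1-\xx)^\beta\right),
\]
so it suffices to show that $L(\xx^\alpha(1-\xx)^\beta) \geq 0$ for every generator with $|\alpha|+|\beta| \leq l$.

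\textbf{Step 3 (non-negativity on generators).} This is a short case analysis. First reducing factor by factor via $x_i^2 \equiv x_i$, each single-variable factor $x_i^{\alpha_i}(1-x_i)^{\beta_i}$ collapses to one of $\{1, x_i, 1-x_i, 0\}$, and is $0$ exactly when $\alpha_i,\beta_i$ are both positive. In the surviving cases set $A = \mathrm{supp}(\alpha)$, $B = \mathrm{supp}(\beta)$ (disjoint). Applying $x_i x_j \equiv 0$ then reduces $\xx^\alpha(1-\xx)^\beta$ to $0$ if $|A|\geq 2$, to $x_i$ if $A=\{i\}$, and to $1-\sum_{j\in B} x_j$ if $A=\emptyset$. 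Evaluating $L$ on these gives $0$, $1/l$, or $1-|B|/l$, and in the last case $|B| \leq |\beta| \leq l$ guarantees non-negativity.

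The only delicate point is the bookkeeping of Step 3, specifically making sure higher exponents $\alpha_i,\beta_i\geq 2$ are first absorbed by $x_i^2 = x_i$ before the edge relations are applied. A more pedestrian variant avoids $L$ altogether and matches coefficients of $1$ and each $x_i$ in the basis $\{1,x_1,\ldots,x_n\}$ of $\mathbb{R}[\xx]/I_{K_n}$: writing $\mu_{\emptyset,B}$ for the sum of $\lambda_{0,\beta}$ with $\mathrm{supp}(\beta) = B$, one obtains $\gamma = \sum_B \mu_{\emptyset,B}$ and $\sum_{B \ni i} \mu_{\emptyset,B} \geq 1$ for every $i$; summing over $i$ yields $n \leq \sum_B |B|\mu_{\emptyset,B} \leq l \gamma$. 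Either route gives $\gamma_l(K_n) \geq n/l$, which together with Step 1 proves the theorem.
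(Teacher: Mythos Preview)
Your proof is correct. The paper's argument, however, never reduces to $K_n$: it works directly for an arbitrary graph $G$ by extracting the constant coefficient and the coefficient of each $x_i$ in the quotient, obtaining $\gamma = \sum_{\beta} \lambda_{0,\beta}$ and $\sum_{\beta:\, i \in \operatorname{supp}(\beta)} \lambda_{0,\beta} \geq 1$ for every $i$, then summing over $i$ and using $|\operatorname{supp}(\beta)| \leq |\beta| \leq l$. This is exactly the ``pedestrian variant'' you sketch at the end, carried out without first passing to $K_n$. Your primary route via the pseudo-expectation $L$ with $L(1)=1$, $L(x_i)=1/l$ is the LP-dual of that same computation: $L$ is precisely the dual vector whose coordinates are the multipliers on the constant and $x_i$ coefficient equations, so the two arguments are formally equivalent. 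What your presentation buys is a cleaner bookkeeping (the case analysis on generators makes the non-negativity transparent), at the cost of the extra Step~1; what the paper's direct argument buys is that one sees immediately the bound is graph-independent without invoking the monotonicity $I_G \subseteq I_{K_n}$.
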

\begin{proof}
Assume we have the expression
\begin{equation}
\label{eq:proofid}
\gamma - \sum_{i=1}^n x_i \quad = \quad \sum_{|\alpha+\beta|\leq l} \lambda_{\alpha,\beta} x^{\alpha} (1-x)^{\beta}
\; \text{ mod } \;
\left(\begin{array}{ll} x_i x_j = 0, \; ij \in E\\
x_i^2 = x_i, \; i \in V
\end{array}\right)
\end{equation}
where the coefficients $\lambda_{\alpha,\beta}$ are nonnegative.
We will show that $\gamma \geq n/l$.

\begin{itemize}
\item The constant coefficient on the right-hand side of \eqref{eq:proofid} is
\begin{equation}
\label{eq:proofconstantcoeff}
\sum_{\beta} \lambda_{\emptyset,\beta}.
\end{equation}

\item For $i \in \{1,\ldots,n\}$, the coefficient of $x_i$ in the expression on the right-hand side of \eqref{eq:proofid} is
\begin{equation}
\label{eq:proofcoeffxi}
- \sum_{\beta : i \in \beta} \lambda_{\emptyset,\beta} + \sum_{\beta : i \notin \beta} \lambda_{i,\beta}
\end{equation}
The presence of the first term is clear and comes from the fact that $(1-x)^{\beta} = 1 - \sum_{i \in \beta} x_i + (\text{terms of degree at least two})$. For the second term in \eqref{eq:proofcoeffxi} note that
\begin{equation}
\label{eq:expansion11}
x_i (1-x)^{\beta} = x_i - x_i \sum_{j \in \beta} x_j + (\text{terms of degree at least 3})
\end{equation}
If $i \in \beta$ then the term $x_i$ on the RHS of \eqref{eq:expansion11} will be cancelled by $x_i^2 = x_i$. This explains why in the second summation in \eqref{eq:proofcoeffxi} we have to take only the $\beta$’s such that $i \notin \beta$.

\end{itemize}

Equating coefficients we see that we must have:
\begin{equation}
\label{eq:proofmatchingcoeffs}
\begin{cases}
\gamma &= \sum_{|\beta|\leq l} \lambda_{\emptyset,\beta}\\
1 &=  \sum_{\beta : i \in \beta} \lambda_{\emptyset,\beta} - \sum_{\beta : i \notin \beta} \lambda_{i,\beta} \qquad \forall i \in \{1,\ldots,n\}.
\end{cases}
\end{equation}

Since the coefficients $\lambda_{\alpha,\beta}$ are nonnegative, the second line of \eqref{eq:proofmatchingcoeffs} tells us that $\sum_{\beta : i \in \beta} \lambda_{\emptyset,\beta} \geq 1$ for all $i \in \{1,\ldots,n\}$. We thus get that
\[
\gamma = \sum_{|\beta|\leq l} \lambda_{\emptyset,\beta} \geq \frac{1}{l} \sum_{i=1}^n \sum_{|\beta|\leq l, i \in \beta} \lambda_{\emptyset,\beta} \geq \frac{n}{l}
\]
as desired.
\end{proof}
Note that when applying Theorem \ref{thm:nec} to the complete graph (for which $\alpha(G) = 1$), we see that the $n$th level of the hierarchy is \textit{necessary} in order to obtain an optimal bound of $1$.

\begin{table}[ht]
    \centering
    \begin{tabular}{p{5cm}|p{10cm}}
         \raisebox{-2.5cm}{\includegraphics[scale=0.35]{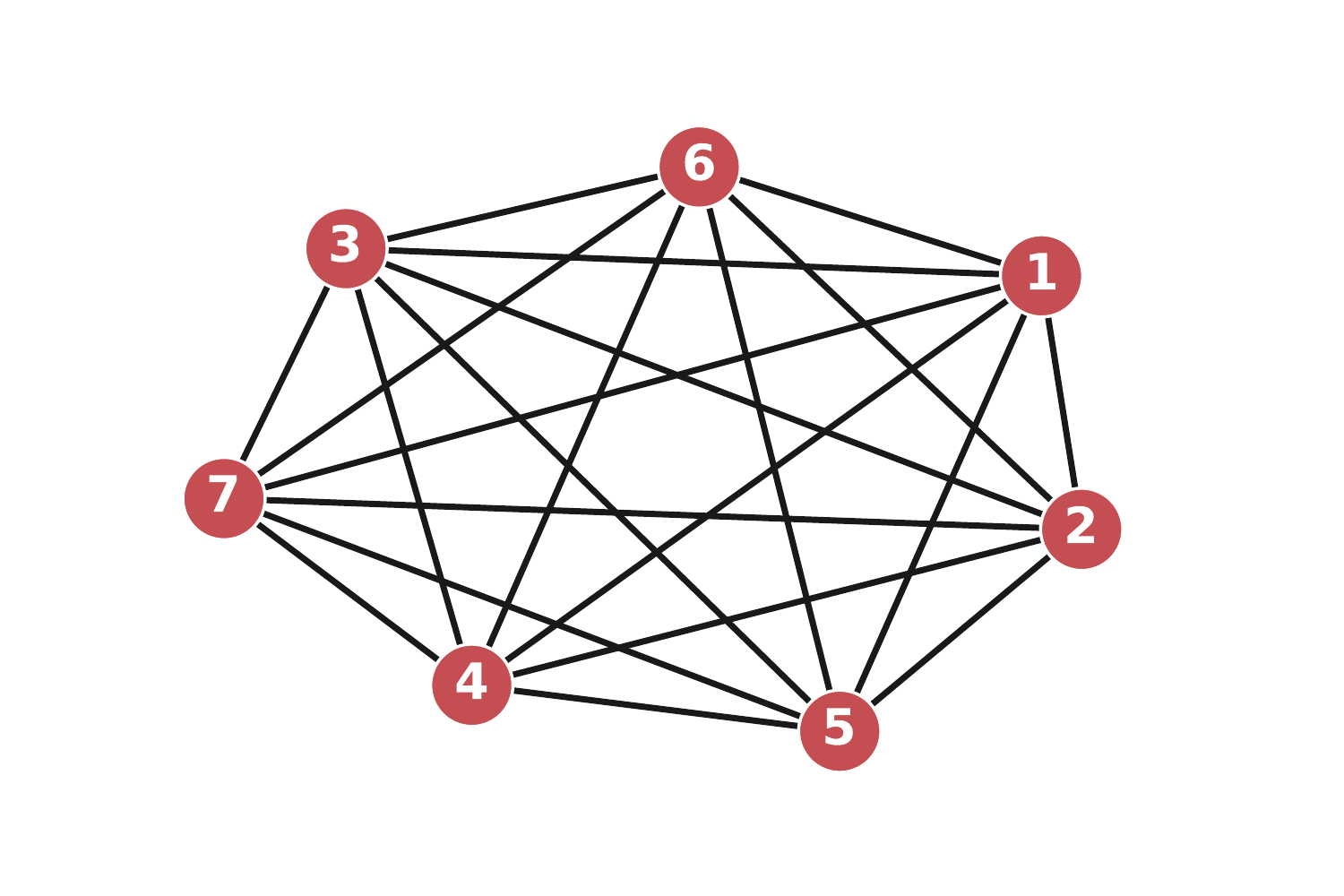}} & 
\textbf{Proof that $1 - \sum_{i=1}^7 x_i \geq 0$:}
\begin{lstlisting}[escapechar=!,frameshape=nnnn]
[Step 0] 0 <= !\textcolor{red}{-x3 - x7 + 1}! = !\textcolor{blue}{(-x3 + 1)}! * !\textcolor{blue}{(-x7 + 1)}!
[Step 1] 0 <= !\textcolor{red}{-x3 - x6 - x7 + 1}! = !\textcolor{red}{[Step 0]}! * !\textcolor{blue}{(-x6 + 1)}!
[Step 2] 0 <= !\textcolor{red}{-x3 - x4 - x6 - x7 + 1}! = !\textcolor{red}{[Step 1]}! * !\textcolor{blue}{(-x4 + 1)}!
[Step 3] 0 <= !\textcolor{red}{-x3 - x4 - x5 - x6 - x7 + 1}! = !\textcolor{red}{[Step 2]}! * !\textcolor{blue}{(-x5 + 1)}!
[Step 4] 0 <= !\textcolor{red}{-x2 - x3 - x4 - x5 - x6 - x7 + 1}! = !\textcolor{red}{[Step 3]}! * !\textcolor{blue}{(-x2 + 1)}!
[Step 5] 0 <= !\textcolor{red}{-x1 - x2 - x3 - x4 - x5 - x6 - x7 + 1}! = !\textcolor{red}{[Step 4]}! * !\textcolor{blue}{(-x1 + 1)}!
0 <= 1 * !\textcolor{red}{[Step 5]}! = !$1 - \sum_{i=1}^{7} x_i. \qed$!
\end{lstlisting}
 \\ \hline
\raisebox{-2.5cm}{\includegraphics[scale=0.35]{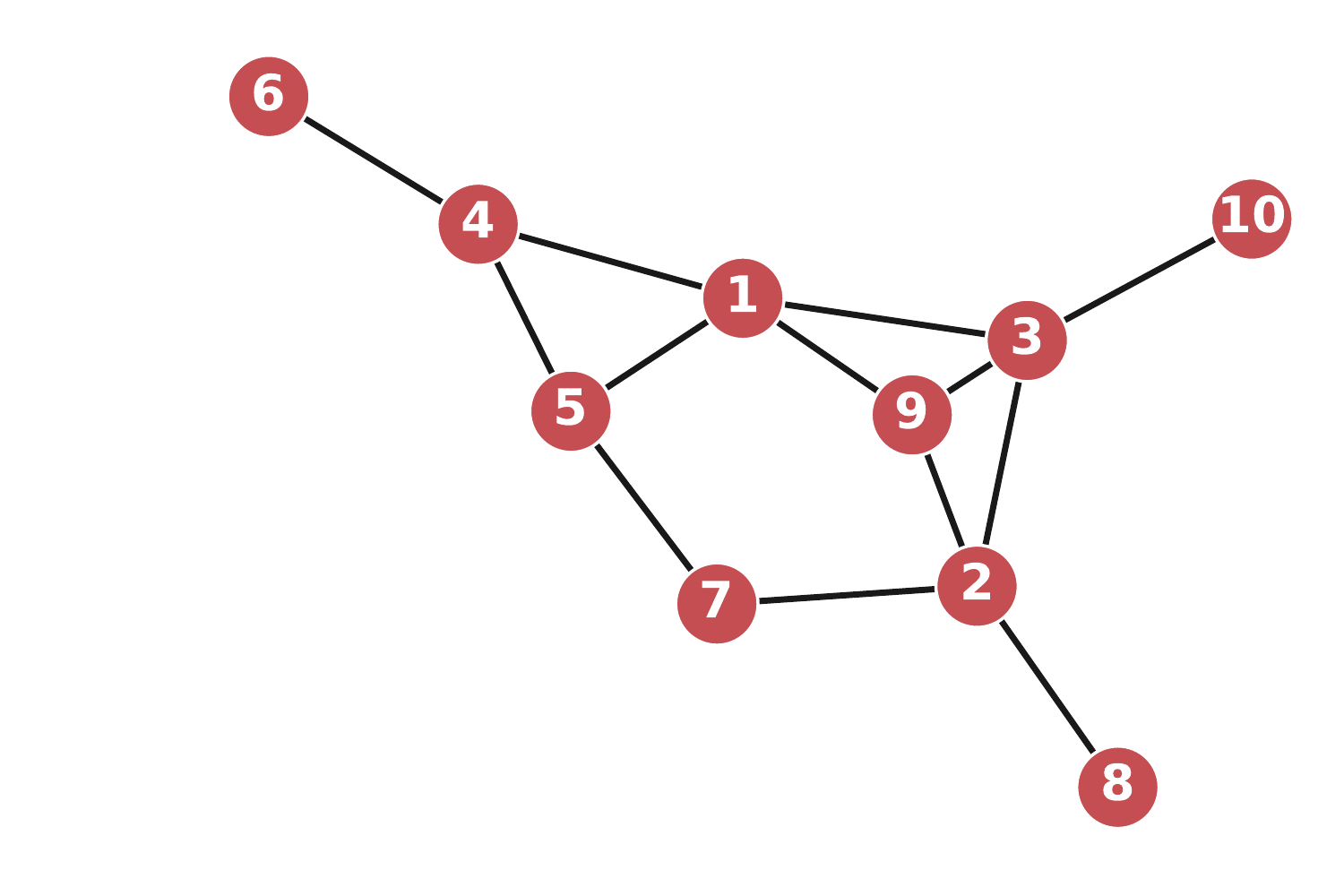}} & 
\textbf{Proof that $5-\sum_{i=1}^{10} x_i \geq 0$:}
\begin{lstlisting}[escapechar=!,frameshape=nnnn]
[Step 0] 0 <= !\textcolor{red}{-x1 - x3 + 1}! = !\textcolor{blue}{(-x3 + 1)}! * !\textcolor{blue}{(-x1 + 1)}!
[Step 1] 0 <= !\textcolor{red}{-x4 - x6 + 1}! = !\textcolor{blue}{(-x6 + 1)}! * !\textcolor{blue}{(-x4 + 1)}!
[Step 2] 0 <= !\textcolor{red}{-x10 - x3 + 1}! = !\textcolor{blue}{(-x3 + 1)}! * !\textcolor{blue}{(-x10 + 1)}!
[Step 3] 0 <= !\textcolor{red}{-x5 - x7 + 1}! = !\textcolor{blue}{(-x7 + 1)}! * !\textcolor{blue}{(-x5 + 1)}!
[Step 4] 0 <= !\textcolor{red}{-x2 - x8 + 1}! = !\textcolor{blue}{(-x8 + 1)}! * !\textcolor{blue}{(-x2 + 1)}!
[Step 5] 0 <= !\textcolor{red}{-x1 - x3 - x9 + 1}! = !\textcolor{red}{[Step 0]}! * !\textcolor{blue}{(-x9 + 1)}!
0 <= 1 * !\textcolor{red}{[Step 5]}! + 1 * !\textcolor{blue}{(x3)}! + 1 * !\textcolor{red}{[Step 2]}!
    + 1 * !\textcolor{red}{[Step 4]}! + 1 * !\textcolor{red}{[Step 1]}! + 1 * !\textcolor{red}{[Step 3]}! = !$5 - \sum_{i=1}^{10} x_i. \qed$!
\end{lstlisting}
\\ \hline
         \raisebox{-3cm}{\includegraphics[scale=0.35]{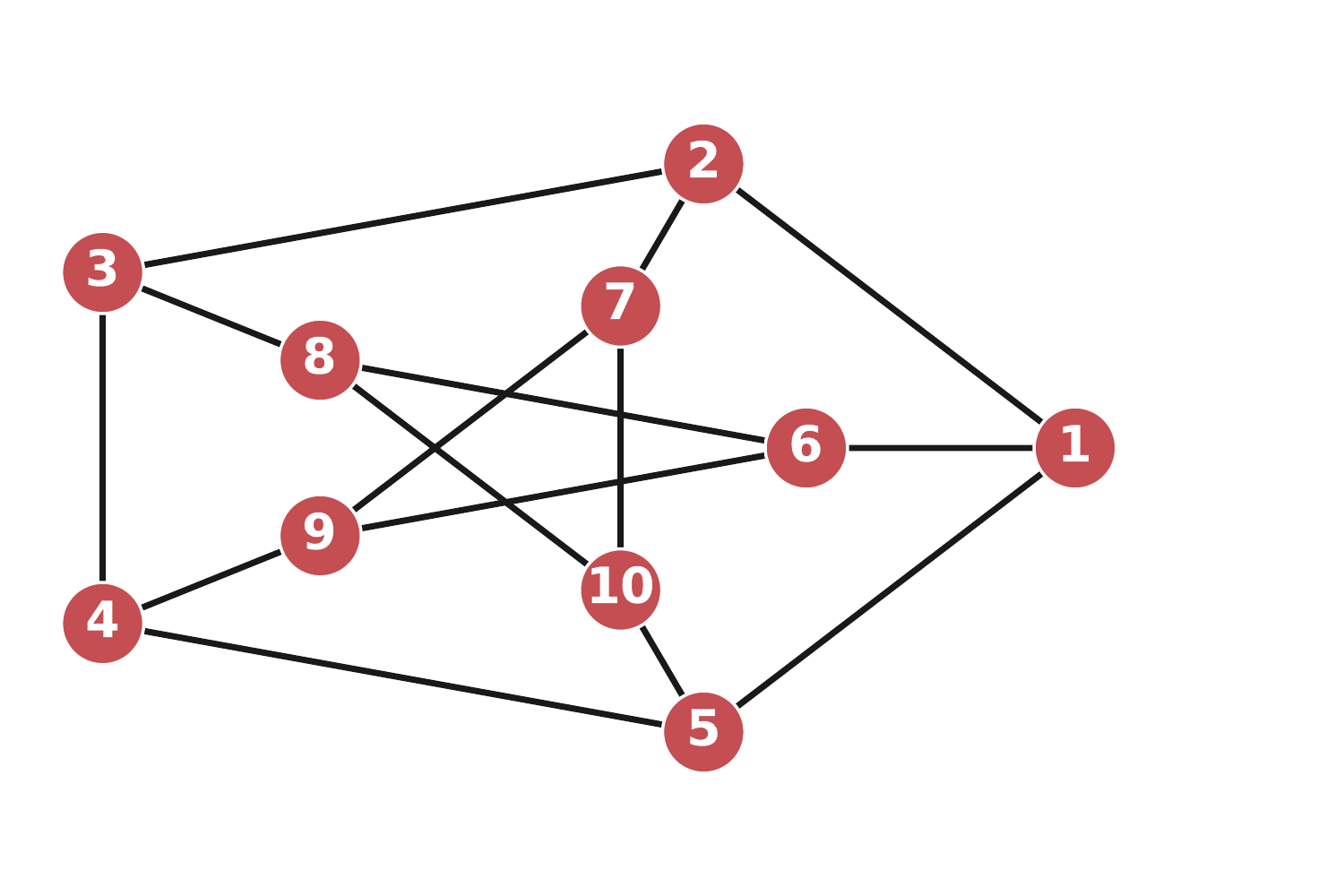}} & 

\textbf{Proof that $4-\sum_{i=1}^{10} x_i \geq 0$:}
\begin{lstlisting}[escapechar=!,frameshape=nnnn]
[Step 0] 0 <= !\textcolor{red}{-x6 - x8 + 1}! = !\textcolor{blue}{(-x6 + 1)}! * !\textcolor{blue}{(-x8 + 1)}! 
[Step 1] 0 <= !\textcolor{red}{-x6 - x9 + 1}! = !\textcolor{blue}{(-x6 + 1)}! * !\textcolor{blue}{(-x9 + 1)}! 
[Step 2] 0 <= !\textcolor{red}{-x4 - x9 + 1}! = !\textcolor{blue}{(-x9 + 1)}! * !\textcolor{blue}{(-x4 + 1)}! 
[Step 3] 0 <= !\textcolor{red}{-x10 - x7 + 1}! = !\textcolor{blue}{(-x10 + 1)}! * !\textcolor{blue}{(-x7 + 1)}! 
[Step 4] 0 <= !\textcolor{red}{-x2 - x3 + 1}! = !\textcolor{blue}{(-x3 + 1)}! * !\textcolor{blue}{(-x2 + 1)}! 
[Step 5] 0 <= !\textcolor{red}{-x4 - x5 + 1}! = !\textcolor{blue}{(-x4 + 1)}! * !\textcolor{blue}{(-x5 + 1)}! 
[Step 6] 0 <= !\textcolor{red}{x10*x9 - x10 - x7 - x9 + 1}! = !\textcolor{red}{[Step 3]}! * !\textcolor{blue}{(-x9 + 1)}! 
[Step 7] 0 <= !\textcolor{red}{-x10 - x5 + 1}! = !\textcolor{blue}{(-x10 + 1)}! * !\textcolor{blue}{(-x5 + 1)}! 
[Step 8] 0 <= !\textcolor{red}{-x7 - x9 + 1}! = !\textcolor{blue}{(-x9 + 1)}! * !\textcolor{blue}{(-x7 + 1)}! 
[Step 9] 0 <= !\textcolor{red}{-x1 - x5 + 1}! = !\textcolor{blue}{(-x1 + 1)}! * !\textcolor{blue}{(-x5 + 1)}! 
[Step 10] 0 <= !\textcolor{red}{-x3 - x4 + 1}! = !\textcolor{blue}{(-x3 + 1)}! * !\textcolor{blue}{(-x4 + 1)}! 
[Step 11] 0 <= !\textcolor{red}{-x2 - x7 + 1}! = !\textcolor{blue}{(-x7 + 1)}! * !\textcolor{blue}{(-x2 + 1)}! 
[Step 12] 0 <= !\textcolor{red}{-x10 - x8 + 1}! = !\textcolor{blue}{(-x10 + 1)}! * !\textcolor{blue}{(-x8 + 1)}! 
[Step 13] 0 <= !\textcolor{red}{x3*x9 - x3 - x4 - x9 + 1}! = !\textcolor{red}{[Step 2]}! * !\textcolor{blue}{(-x3 + 1)}! 
[Step 14] 0 <= !\textcolor{red}{x1*x3 - x1 - x2 - x3 + 1}! = !\textcolor{red}{[Step 4]}! * !\textcolor{blue}{(-x1 + 1)}! 
[Step 15] 0 <= !\textcolor{red}{x4*x6 - x4 - x6 - x9 + 1}! = !\textcolor{red}{[Step 1]}! * !\textcolor{blue}{(-x4 + 1)}! 
[Step 16] 0 <= !\textcolor{red}{-x6 + x8*x9 - x8 - x9 + 1}! = !\textcolor{red}{[Step 0]}! * !\textcolor{blue}{(-x9 + 1)}! 
[Step 17] 0 <= !\textcolor{red}{x3*x5 - x3 - x4 - x5 + 1}! = !\textcolor{red}{[Step 5]}! * !\textcolor{blue}{(-x3 + 1)}! 
[Step 18] 0 <= !\textcolor{red}{-x3 - x8 + 1}! = !\textcolor{blue}{(-x3 + 1)}!) * !\textcolor{blue}{(-x8 + 1)}! 
[Step 19] 0 <= !\textcolor{red}{x3*x7 + x3*x9 - x3 + x4*x7 - x4 - x7 - x9 + 1}! = !\textcolor{red}{[Step 13]}! * !\textcolor{blue}{(-x7 + 1)}! 
[Step 20] 0 <= !\textcolor{red}{x10*x4 - x10 - x4 - x5 + 1}! = !\textcolor{red}{[Step 5]}! * !\textcolor{blue}{(-x10 + 1)}! 
[Step 21] 0 <= !\textcolor{red}{-x10*x6 - x5*x6 + x6}! = !\textcolor{red}{[Step 7]}! * !\textcolor{blue}{(x6)}! 
[Step 22] 0 <= !\textcolor{red}{x10*x9 - x10 + x5*x7 + x5*x9 - x5 - x7 - x9 + 1}! = !\textcolor{red}{[Step 6]}! * !\textcolor{blue}{(-x5 + 1)}! 
[Step 23] 0 <= !\textcolor{red}{-x3*x7 - x7*x8 + x7}! = !\textcolor{red}{[Step 18]}! * !\textcolor{blue}{(x7)}! 
[Step 24] 0 <= !\textcolor{red}{-x3*x7 - x3*x9 + x3}! = !\textcolor{red}{[Step 8]}! * !\textcolor{blue}{(x3)}! 
[Step 25] 0 <= !\textcolor{red}{x4*x6 - x4 + x5*x6 + x5*x9 - x5 - x6 - x9 + 1}! = !\textcolor{red}{[Step 15]}! * !\textcolor{blue}{(-x5 + 1)}! 
[Step 26] 0 <= !\textcolor{red}{-x1*x3 - x3*x5 + x3}! = !\textcolor{red}{[Step 9]}! * !\textcolor{blue}{(x3)}! 
[Step 27] 0 <= !\textcolor{red}{x3*x6 + x3*x9 - x3 + x4*x6 - x4 - x6 - x9 + 1}! = !\textcolor{red}{[Step 13]}! * !\textcolor{blue}{(-x6 + 1)}! 
[Step 28] 0 <= !\textcolor{red}{x3*x6 + x3*x9 - x3 - x6 + x8*x9 - x8 - x9 + 1}! = !\textcolor{red}{[Step 16]}! * !\textcolor{blue}{(-x3 + 1)}! 
[Step 29] 0 <= !\textcolor{red}{x3*x6 - x3 - x6 - x8 + 1}! = !\textcolor{red}{[Step 0]}! * !\textcolor{blue}{(-x3 + 1)}! 
[Step 30] 0 <= !\textcolor{red}{x1*x4 - x1 - x4 - x5 + 1}! = !\textcolor{red}{[Step 5]}! * !\textcolor{blue}{(-x1 + 1)}! 
[Step 31] 0 <= !\textcolor{red}{x2*x4 + x2*x5 - x2 + x3*x5 - x3 - x4 - x5 + 1}! = !\textcolor{red}{[Step 17]}! * !\textcolor{blue}{(-x2 + 1)}! 
[Step 32] 0 <= !\textcolor{red}{-x2 + x3*x7 - x3 - x7 + 1}! = !\textcolor{red}{[Step 4]}! * !\textcolor{blue}{(-x7 + 1)}! 
[Step 33] 0 <= !\textcolor{red}{-x1*x4 - x2*x4 + x4}! = !\textcolor{red}{[Step 14]}! * !\textcolor{blue}{(x4)}! 
[Step 34] 0 <= !\textcolor{red}{-x2*x5 - x5*x7 + x5}! = !\textcolor{red}{[Step 11]}! * !\textcolor{blue}{(x5)}! 
[Step 35] 0 <= !\textcolor{red}{x1*x9 - x1 - x6 - x9 + 1}! = !\textcolor{red}{[Step 1]}! * !\textcolor{blue}{(-x1 + 1)}! 
[Step 36] 0 <= !\textcolor{red}{-x10 + x7*x8 - x7 - x8 + 1}! = !\textcolor{red}{[Step 3]}! * !\textcolor{blue}{(-x8 + 1)}! 
[Step 37] 0 <= !\textcolor{red}{-x10*x4 - x4*x7 + x4}! = !\textcolor{red}{[Step 3]}! * !\textcolor{blue}{(x4)}! 
[Step 38] 0 <= !\textcolor{red}{x10*x6 - x10 - x6 - x8 + 1}! = !\textcolor{red}{[Step 0]}! * !\textcolor{blue}{(-x10 + 1)}! 
[Step 39] 0 <= !\textcolor{red}{-x1*x9 - x5*x9 + x9}! = !\textcolor{red}{[Step 9]}! * !\textcolor{blue}{(x9)}! 
[Step 40] 0 <= !\textcolor{red}{-x10*x9 - x8*x9 + x9}! = !\textcolor{red}{[Step 12]}! * !\textcolor{blue}{(x9)}! 
[Step 41] 0 <= !\textcolor{red}{-x3*x6 - x4*x6 + x6}! = !\textcolor{red}{[Step 10]}! * !\textcolor{blue}{(x6)}! 
0 <= 1/5 * !\textcolor{red}{[Step 19]}! + 1/5 * !\textcolor{red}{[Step 20]}! + 1/5 * !\textcolor{red}{[Step 27]}! 
    + 1/5 * !\textcolor{red}{[Step 29]}! + 1/5 * !\textcolor{red}{[Step 33]}! + 1/5 * !\textcolor{red}{[Step 30]}! 
    + 1/5 * !\textcolor{red}{[Step 36]}! + 3/5 * !\textcolor{red}{[Step 41]}! + 1/5 * !\textcolor{red}{[Step 23]}! 
    + 1/5 * !\textcolor{red}{[Step 34]}! + 3/5 * !\textcolor{red}{[Step 35]}! + 2/5 * !\textcolor{red}{[Step 21]}! 
    + 3/5 * !\textcolor{red}{[Step 24]}! + 1/5 * !\textcolor{red}{[Step 14]}! + 3/5 * !\textcolor{red}{[Step 32]}! 
    + 1/5 * !\textcolor{red}{[Step 40]}! + 1/5 * !\textcolor{red}{[Step 37]}! + 1/5 * !\textcolor{red}{[Step 22]}! 
    + 2/5 * !\textcolor{red}{[Step 25]}! + 1/5 * !\textcolor{red}{[Step 28]}! + 3/5 * !\textcolor{red}{[Step 39]}! 
    + 2/5 * !\textcolor{red}{[Step 38]}! + 1/5 * !\textcolor{red}{[Step 26]}! + 1/5 * !\textcolor{red}{[Step 31]}! = !$4 - \sum_{i=1}^{10} x_i. \qed$!
    !\vspace{-1cm}!
\end{lstlisting}
    \end{tabular}
    \caption{\label{tab:example_proofs}Examples of proofs generated by our agent. Axioms are shown in blue, and derived polynomials (intermediate lemmas) are shown in red. Note that the coefficients in the proofs are all \textit{rational}, leading to exact and fully verifiable proofs. The first graph is the complete graph, the second graph is a randomly generated graph, and the third graph is the Petersen graph.}
    \label{tab:example_proofs_out}
\end{table}

\section{Implementation details}

We now provide implementation details regarding the proposed prover.

\subsection{Computing $T$}
Our architecture relies on the computation of the trilinear mapping $T$, which maps triplets of polynomials to feature vectors in $\RR^s$, where $s$ denotes a user-specified feature size.  
Note that $T$ can be re-written as follows 
\begin{align}
\label{eq:mapping_t}
T (m, f, a) & = \sum_{\alpha, \beta, \gamma} m_{\alpha} f_{\beta} a_{\gamma} T(\xx^{\alpha}, \xx^{\beta}, \xx^{\gamma}) \\ & = \sum_{[(\xx^\alpha, \xx^\beta, \xx^\gamma)] \in \mathscr{E}} T(\xx^\alpha, \xx^\beta, \xx^\gamma) \sum_{(\xx^{\alpha'}, \xx^{\beta'}, \xx^{\gamma'}) \in [(\xx^\alpha, \xx^\beta, \xx^\gamma)]} m_{\alpha'} f_{\beta'} a_{\gamma'}, \label{eq:t_mapping}
\end{align}
where we used the property that $T$ is the same for elements of the same equivalence class. We represent the mapping $T$ in practice with a matrix $\mathbf{T}$ of size $s \times |\mathscr{E}|$ (with each column equal to  $T(\xx^\alpha, \xx^\beta, \xx^\gamma)$, for the different equivalence classes in $\mathscr{E}$). In practice, we compute the tensor product $m \otimes f \otimes a \in \mathbb{R}^{N \times N \times N}$ (where each polynomial $m, f, a$ is represented as a vector of size the number of monomials $N$); Eq. \eqref{eq:t_mapping} then corresponds to the matrix vector multiplication between the matrix $\mathbf{T}$ and the vector $\mathbf{z} \in \mathbb{R}^{|\mathscr{E}|}$, where $z_i = \sum_{(\alpha, \beta, \gamma) \in e_i} [m \otimes f \otimes a]_{\alpha, \beta, \gamma}$, and $e_i$ denotes the $i$-th equivalence class.

\subsection{Architecture details}

We now describe in more detail the specifics of the architecture used for our Q-network. Recall from Section 4 in the main paper that we consider a Q-network of the form
\[
q_{\theta} \left( \{ m_i \}_{i=1}^{|\mathcal{M}_t|}, \{ e_j \}_{j=1}^{|\mathcal{E}_t|}, f, a \right) = \zeta\left( \sigma(V) , \sigma(W)  \right),
\]
where $V = \{v_{\theta^{(1)}}(m_i, f, a)\}_{i=1}^{|\mathcal{M}_t|}, W = \{v_{\theta^{(2)}}(e_j, f, a)\}_{j=1}^{|\mathcal{E}_t|}$. To satisfy the re-labeling symmetry, recall that we set $v_{\theta} = u_{\theta} \circ T$. In practice, we set $u_{\theta^{(i)}}$ to be a two-layer fully connected neural network with ReLU non-linearity. We set the size of the intermediate layers and output layer to $500$. The $\sigma$ operator is set to $\max$. Moreover, the function $\zeta$ is built in two-steps; we first take a $\max$ operation to aggregate the features from equalities and memory elements (and hence obtain a feature vector of size $500$), followed by a two-layer fully connected neural network with ReLU non-linearities. The size of the intermediate features is also set to $500$, while the output of the neural network is a scalar representing the $q$ value. Note that we have experimented with larger feature sizes/deeper networks, and we did not observe significant improvements of this choice.

\begin{figure}[ht]
\centering
\includegraphics[scale=0.5]{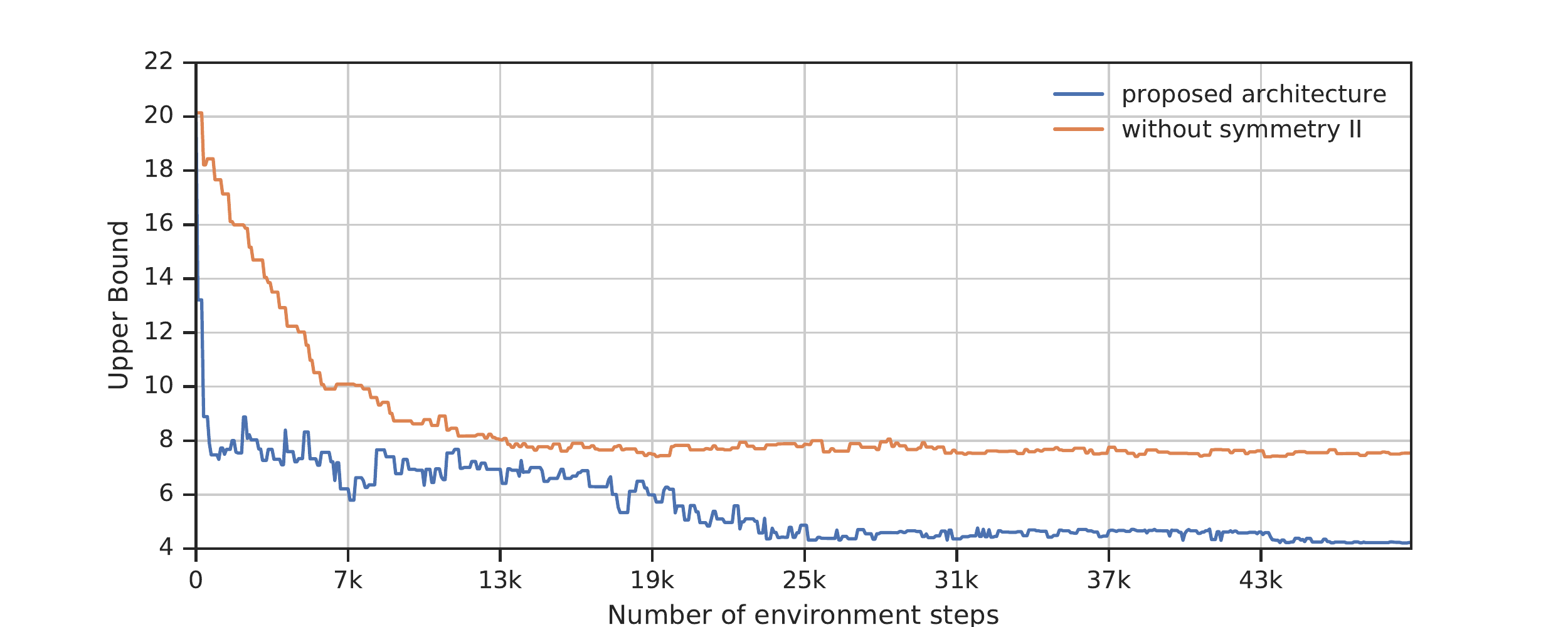}
\caption{\label{fig:learning_curves} Learning curves showing the average estimated upper bound on a validation set vs. number of enivronment steps. The learning curve of the \textit{proposed} architecture (which incorporates symmetries) is shown in blue. To show the importance of the built-in symmetries, we compare it to a vanilla architecture (red), where $T$ is chosen to be a standard \textit{linear} mapping on the vector of concatenated polynomials in the input (rather than the trilinear mapping described in Eq. (\ref{eq:mapping_t})). Illustration on the stable set problem with $n=20$.}
\end{figure}

We illustrate in Fig. \ref{fig:learning_curves} the learning curves obtained with the proposed architecture, as well as a baseline architecture which does \textit{not} consider built-in symmetries for variable relabeling. Specifically, the mapping $T$ is replaced with a linear mapping of the vector resulting from the concatenation of the input polynomials. We see that the proposed architecture leads to significantly better bound compared to the baseline vanilla architecture. 

\subsection{Leveraging the structure of the Q-network for faster training}

For training the prover agent, we use the DQN algorithm in \cite[Algorithm 1]{mnih2013playing}. In Q-Learning, the state-action value function needs to be computed for every action, as the $\epsilon$-greedy strategy chooses the action with largest $q$-value with probability $1-\epsilon$. To accelerate the computation of the $q$ value for all actions, we leverage the structure of our Q-network. Specifically, at time step $t$, we distinguish between two classes of actions: 1) \textit{new actions}, which are made possible due to the the newly derived inequality $a_t$; e.g., $a_t x_i$, or $a_t (1-x_i)$ 2) \textit{old actions}, which do not depend on the new element added to the memory. While we do need to compute the $q$ values for all \textit{new actions}, observe that we can re-use computations from previous time steps to speed up the computation of $q(s_{t+1}, a)$ where $a$ is an old action. Specifically, we note that the only difference between $s_t$ and $s_{t+1}$ is that $a_t$ has been added to the memory; that is, $\mathcal{M}_{t+1} = \mathcal{M}_t \cup \{ a_t \}$. Hence, observe that 
\[
\max_{m \in \mathcal{M}_{t+1}} v_{\theta^{(1)}} (m, f, a) = \max\left(F_t, v_{\theta^{(1)}} (a_t, f, a)\right),
\]
where $F_t = \max_{m \in \mathcal{M}_t}(v_{\theta^{(1)}} (m, f, a))$. By caching $F_t$, we can therefore significantly speed up the computation of the $q$-value for old actions, which allows us to scale to larger problem instances. This leads to significant speed-ups in our setting where the number of actions is large; specifically, we have $|\mathcal{A}_t| = 2n|\mathcal{M}_t|$.

\subsection{Hyperparameters}

We now describe the hyperparameters we use for learning our prover agent.  We use RMSProp optimizer with a learning rate $10^{-5}$, and train our model for $1e6$ steps. We set the size of the replay memory to $100$, and the degree of any intermediate polynomial to $2$. We use a batch size of $32$, discount factor of $0.99$, $\epsilon = 0.1$, an $\ell_1$ loss for the TD-error \cite{sutton2018reinforcement}, and initialize the weights of the network with one fixed seed ($0$). The training is performed on a single GPU (NVIDIA V100).

\section{Complementary experimental results}

Table \ref{tab:example_proofs} shows more examples of proofs. Fig. \ref{fig:comparison_dyn_stat} shows a comparison between the dynamic and static approaches in terms of the obtained bound and the size of the LP for $100$ instances of random graphs of size $n=50$. 

\begin{figure}[ht]
    \centering
    \includegraphics[scale=0.4]{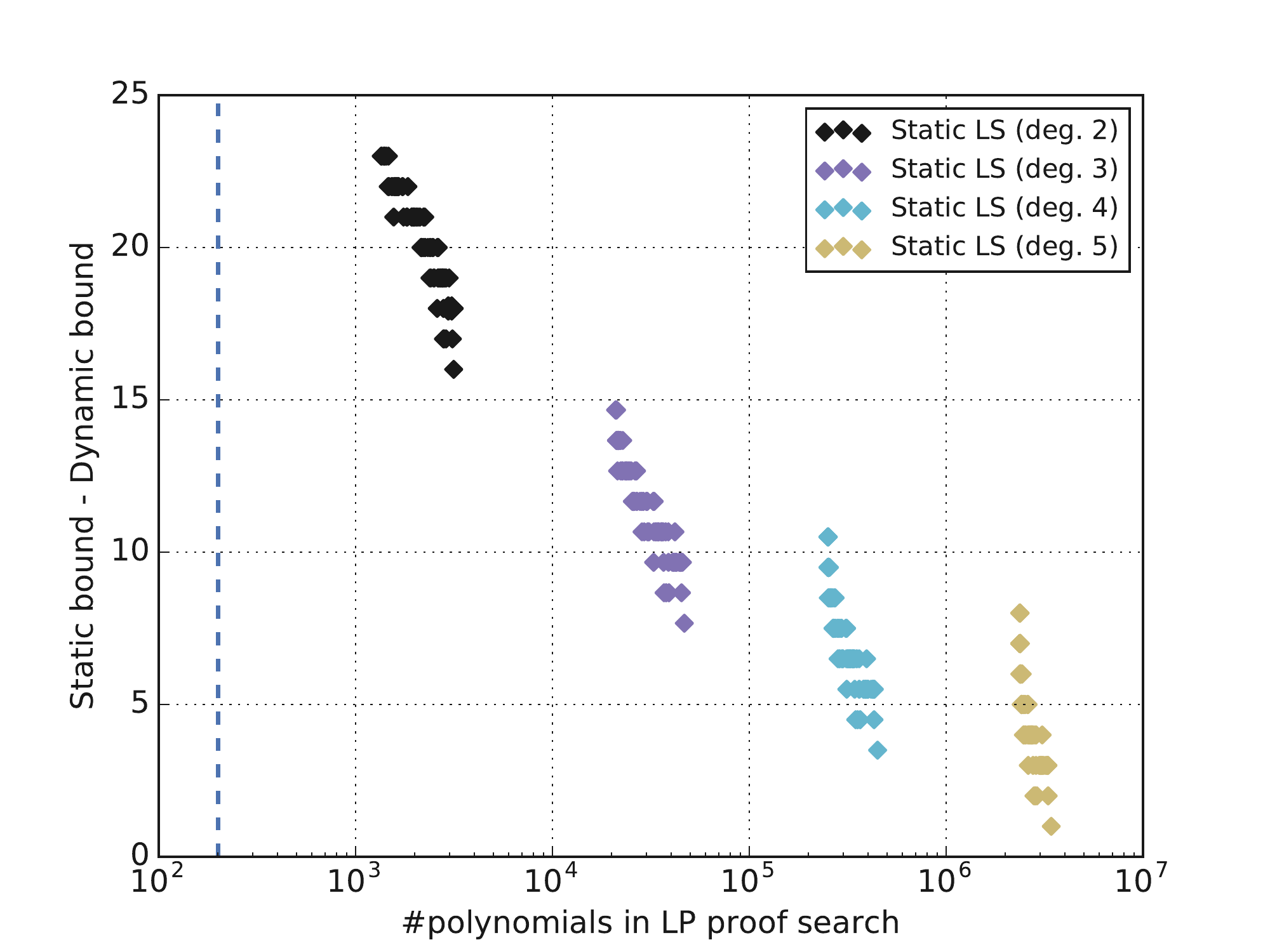}
    \caption{\label{fig:comparison_dyn_stat}Comparison of dynamic and static results on $100$ problem instances of size $n=50$ nodes. A problem instance is represented by a point in the plane, where the $y$ axis denotes the difference between the static bound and the  dynamic bound (i.e., $\gamma_{\text{static}} - \gamma_{\text{dynamic}}$), and the $x$ axis represents the number of proof generators involved in the LP. The dashed line represents the size of the LP when using our dynamic approach (which is limited to $100$, excluding the axioms). Note that static proofs require a very large number of polynomials in the LP proof search. In contrast, our dynamic approach allows to reach better results with $4-5$ orders of magnitude less polynomials. This is despite training our prover on graphs of significantly smaller size ($25$ nodes).}
    \label{fig:num_generators_perf}
\end{figure}

\end{document}